
\documentclass[10pt,twocolumn,letterpaper]{article}

\usepackage{cvpr}      

\usepackage{graphicx}
\usepackage{amsmath}
\usepackage{amssymb}
\usepackage{amsthm}
\usepackage{cancel}
\usepackage{booktabs}
\usepackage{adjustbox}

%
\usepackage[pagebackref,breaklinks,colorlinks]{hyperref}

\usepackage[capitalize]{cleveref}
\crefname{section}{Sec.}{Secs.}
\Crefname{section}{Section}{Sections}
\Crefname{table}{Table}{Tables}
\crefname{table}{Tab.}{Tabs.}



\usepackage{amsmath,amsfonts,bm}

\newcommand{\ts}[1]{{({#1})}}









\def\eqref#1{equation~\ref{#1}}









\def\1{\bm{1}}







\def\vzero{{\bm{0}}}

\def\vc{{\bm{c}}}

\def\ve{{\bm{e}}}

\def\vh{{\bm{h}}}

\def\vn{{\bm{n}}}

\def\vp{{\bm{p}}}

\def\vr{{\bm{r}}}
\def\vs{{\bm{s}}}
\def\vt{{\bm{t}}}
\def\vu{{\bm{u}}}
\def\vv{{\bm{v}}}

\def\vx{{\bm{x}}}
\def\vy{{\bm{y}}}



\def\mH{{\bm{H}}}
\def\mI{{\bm{I}}}

\def\mO{{\bm{O}}}

\def\mR{{\bm{R}}}

\def\mV{{\bm{V}}}

\def\mX{{\bm{X}}}
\def\mY{{\bm{Y}}}

\DeclareMathAlphabet{\mathsfit}{\encodingdefault}{\sfdefault}{m}{sl}
\SetMathAlphabet{\mathsfit}{bold}{\encodingdefault}{\sfdefault}{bx}{n}


\def\gG{{\mathcal{G}}}

\def\gN{{\mathcal{N}}}

\def\gS{{\mathcal{S}}}
\def\gT{{\mathcal{T}}}



\def\sG{{\mathbb{G}}}

\def\sR{{\mathbb{R}}}

\def\sX{{\mathbb{X}}}
\def\sY{{\mathbb{Y}}}










\newtheorem{propos}{Proposition}

\begin{document}

\title{Equivariant Point Cloud Analysis via Learning Orientations for Message Passing}

\author{Shitong Luo\textsuperscript{*1}, 
Jiahan Li\textsuperscript{*1,2}, 
Jiaqi Guan\textsuperscript{*3}
\phantom{\thanks{~equal contribution.}}
\\
Yufeng Su\textsuperscript{3}, 
Chaoran Cheng\textsuperscript{3}, 
Jian Peng\textsuperscript{3}, 
Jianzhu Ma\textsuperscript{2}\\
\textsuperscript{1} HeliXon Research\\
\textsuperscript{2} Peking University\\
\textsuperscript{3} University of Illinois Urbana-Champaign\\
\texttt{\small luost26@gmail.com,majianzhu@pku.edu.cn} 
}
\maketitle

\begin{abstract}
Equivariance has been a long-standing concern in various fields ranging from computer vision to physical modeling.
Most previous methods struggle with generality, simplicity, and expressiveness --- some are designed ad hoc for specific data types, some are too complex to be accessible, and some sacrifice flexible transformations. 
In this work, we propose a novel and simple framework to achieve equivariance for point cloud analysis based on the message passing (graph neural network) scheme. 
We find the equivariant property could be obtained by introducing an orientation for each point to decouple the relative position for each point from the global pose of the entire point cloud. 
Therefore, we extend current message passing networks with a module that learns orientations for each point.
Before aggregating information from the neighbors of a point, the networks transforms the neighbors' coordinates based on the point's learned orientations.
We provide formal proofs to show the equivariance of the proposed framework.
Empirically, we demonstrate that our proposed method is competitive on both point cloud analysis and physical modeling tasks. 
Code is available at \url{https://github.com/luost26/Equivariant-OrientedMP}.
\end{abstract}

\section{Introduction}
\label{sec:intro}

3D point cloud has become a prevalent data structure for representing a wide range of 3D objects such as 3D scenes \cite{armeni2016s3dis, geiger2013kitti, chang2015shapenet, wu2015modelnet}, molecules \cite{schutt2018schnet, klicpera2019directional, jumper2021alphafold}, and physical particles \cite{fuchs2020se3, thomas2018tensor}.
To capture the geometric relationship among points, message passing networks (graph neural networks) \cite{gilmer2017neural} and their variants such as DGCNN \cite{wang2019dynamic} and PointNet++ \cite{qi2017pointnet++} have become standard tools to model 3D point cloud data.
A characteristic of 3D point cloud is that most of the scalar features of the point cloud are \textit{invariant}\footnote{Note that invariance is a special case of equivariance.} to global rotation and translation, and the vector features of point clouds are \textit{equivariant} to global rotation and translation.
For instance, the category of a shape and its part semantics are not affected by their poses in the 3D space, and the normal vector of each point rotates together with the shape.
Therefore, designing a new type of message passing network that preserves invariance and equivariance for point clouds is a critical challenge with broad impacts on both computer vision and other science domains.

\begin{figure}
    \centering
    \includegraphics[width=1.0\linewidth]{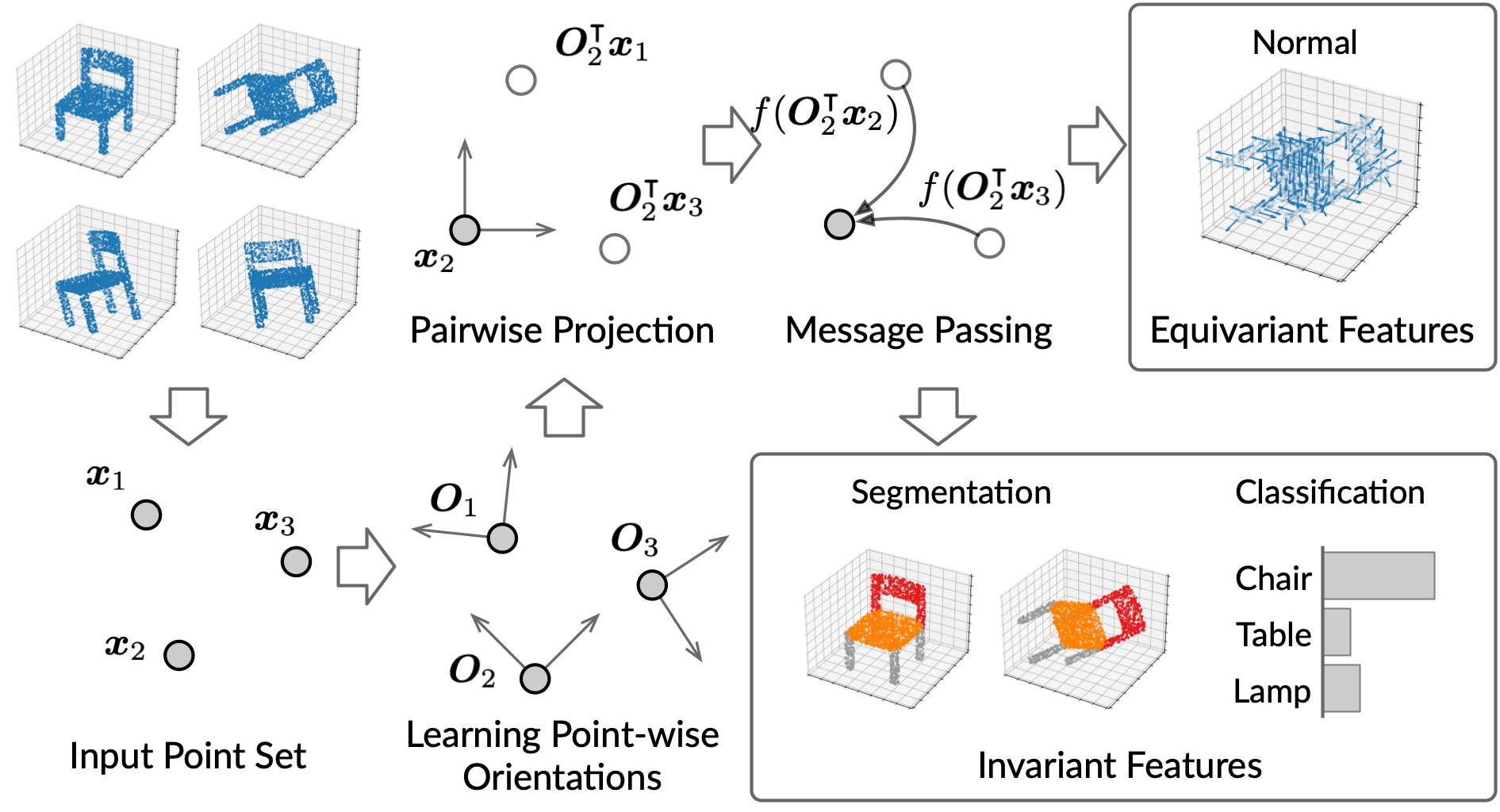}
    \caption{An illustration of the proposed method. It first learns orientations for each point in the point cloud. By projecting the relative coordinate of neighbor points before information aggregation, the model successfully decouples global rotation and achieves equivariance.}
    \label{fig:teaser}
    \vspace{-2.0em}
\end{figure}

The key module of conventional message passing networks for modeling point clouds is to construct the message from point $j$ to $i$ using the difference between 3D coordinates $\vx_j - \vx_i$ which is usually transformed by non-equivariant layers such as regular MLPs.
Hence, it is clear that the outputs of these models are not equivariant to the rotation of point clouds.

Recently, two notable lines of methods have been proposed to address the equivariance problem: tensor field-based neural networks \cite{thomas2018tensor,fuchs2020se3} and vector-based neural networks \cite{deng2021vector,jing2020gvp,schutt2021painn,satorras2021n}.
Tensor field-based networks operate in the 3D space using continuous convolutions and the filters in the model are constrained to be the composite of a learnable radial function and a spherical harmonic.
Though achieving equivariance, they suffer from high space and time complexity because a significant amount of the spherical harmonics need to be calculated on the fly.
The complicated formulation of tensor field networks also makes them less accessible to the community.
Vector-based neural networks are a good alternative to tensor field networks. 
The core part of vector neural networks is their linear layer which takes a list of vectors in 3D space as input and outputs multiple linear combinations of the input vectors. The output vectors are equivariant to input rotations by the nature of the linear combination.
However, as the fully connected layer of vector networks linearly combines input vectors, different dimensions of the vectors are separated, which prohibits flexible vector transformations and hinders information flows between dimensions. In addition, vector-based neural networks are inadequate to model essential geometric relationships such as angles due to the linearity of the vector propagation.

To address this challenge, we introduce a novel equivariant message passing model by learning the orientation of each point during the message passing process. The overall learning process includes three key steps.
First, it learns the orientation matrix for each point.
Second, we project the relative position vector between these two points to the learned orientations, which decouples the relative position of two points from global rotations.
Next, the projected relative position difference is integrated into neural messages by some specific network architecture.
In the end, we can get both invariant and equivariant properties based on the output features of message passing layers along with the learned orientations.

In comparison to previous equivariant models, our method is much simpler but more flexible. Specifically, we can easily extend our model by applying arbitrary transforms to the projected relative position vector to compose the message, while vector-based networks only allow the linear combination of vectors.

To summarize, our main contributions include: 
(1) We propose a new framework for equivariant point cloud analysis and provide formal formulation and proof of the equivariance.
(2) We show the framework's generality by using it to augment classical point cloud networks including DGCNN and RS-CNN.
(3) We conduct extensive experiments on various tasks including point cloud classification, segmentation, normal estimation, and many-body modeling, and demonstrate that the proposed method achieves competitive performance.

\section{Related Work}
\label{sec:related}

\paragraph{Message Passing Networks for Point Clouds}
Message passing neural networks (also known as graph neural networks) \cite{gilmer2017neural} have become a standard architecture to approach machine learning tasks on point clouds.
Representative models include PointNet++ \cite{qi2017pointnet++}, DGCNN \cite{wang2019dynamic}, RS-CNN \cite{liu2019relation}, PointCNN \cite{wu2019pointconv,li2018pointcnn}, and so on \cite{DBLP:conf/cvpr/DengBI18}.
For each point in the point cloud, a message passing layer is first applied to aggregate information from features and relative positions of its neighbor points. The aggregated information is transformed to point representation by using standard neural networks such as MLPs. By definition, directly using MLPs to process coordinates is not equivariant to rotation and one can observe that a rotation of the 3D objects might yield different predictions for these models. 
Recently, there has been a growing interest in designing new deep learning architectures which can preserve equivariant and invariant properties.

\paragraph{Equivariance via Orientation Estimation}
Estimating canonical orientations for objects is a way to decouple the global rotation and achieve equivariance. 
PointNet \cite{qi2017pointnet} constructs a sub-network based on standard MLPs to estimate a canonical orientation in the form of a 3x3 matrix for each input shape. However, the matrix is not restricted to be an orthogonal rotation matrix and it is not equivariant to the input either. PointNet can only achieve approximate equivariance by data augmentation.
GC-Conv \cite{zhang2020gcconv} uses principal component analysis (PCA) to define canonical orientations at different scales but its orientations rely on handcrafted prior knowledge which is not available for many real-world applications. 
LGR-Net \cite{zhao2019lgrnet} relies on normal vectors as additional inputs, which could be viewed as a special form of orientation for each point. Yet, in most settings, normal vectors are hard to acquire or even not defined.
Quaternion equivariant capsule network \cite{zhao2020quaternion} uses quaternions to represents point-wise orientations. However, it requires initial orientations as input, while our methods learn orientations in a fully end-to-end manner.
Other works \cite{brachmann2014learning6d, li2020category, wang2019normalized, rhodin2018unsupervised,sun2020canonical} classified to this class mainly focus on pose estimation and most of them require the supervision by the ground-truth canonical pose and are only equivariant by data augmentation.

\paragraph{Equivariance via Vector-Based Networks}
Recently, vector-based neural networks have emerged as a new approach to achieve equivariance \cite{jing2020gvp, deng2021vector, satorras2021n, schutt2021painn}.
They generalize scalar activations in standard neural networks to vector activations.
To process these vector activations, vector-based linear layer is designed to take a list of vectors as input and output multiple linear combinations of the input vectors, which makes the entire network equivariant by nature.
For the activation functions, GVP \cite{jing2020gvp} scale the vectors based on their norm. Consequently, vectors passed through such activation layers remain a linear combination of the input vectors. VNN \cite{deng2021vector} introduces a non-linear layer that outputs two sets of vectors and projects one to another. Since the two sets of vectors are both equivariant to inputs, the projected vectors are also equivariant.
The major limitation of vector-based networks arises from linear combinations --- as their fully connected layer linearly combines input vectors. This prohibits flexible vector transformations and hinders information flows among vector dimensions.

\paragraph{Other Equivariant Networks}
Tensor field-based networks \cite{thomas2018tensor,fuchs2020se3,poulenard2021functional} are another thread of works which could achieve equivariance. They rely on constraining the convolutional kernel to the spherical harmonics family. By restricting the space of learnable functions, features convolved with these kernels are provably equivariant to rotations and translations. However, it takes much space to cache the spherical harmonics calculated for irregular point clouds, leading to high space and time complexity.
Concurrently, many works take efforts to design invariant operators \cite{schutt2018schnet,weiler20183d,klicpera2020directional, qiao2021unite,batzner2021se, joseph2019momen, li2021closer}, based on rotationally invariant measures such as distances and angles.

\section{Preliminaries}
\label{sec:background}

\subsection{Notations}

We denote a point cloud as $\mX = ( \vx_1, \ldots, \vx_N )$, where each point is represented as a 3D vector $\vx_i \in \sR^3$.
The orientation of point $i$ is defined as a 3x3 rotation matrices $\mO_i$. The optimized orientation matrices should satisfy orthogonality constraint that $\mO_i^\intercal = \mO_i^{-1}$ and has unit determinant ($\det \mO_i = 1$)\footnote{We can always choose the right-handed orientation so that the determinant is 1 instead of -1.}.

\subsection{Equivariance}
\label{sec:prelim:equiv}

Let $\sG$ denote a transformation group (\eg rotation group, permutation group, \etc), $g \in \sG$ denote a specific transform in the group, and $\gT_g(x): \sX \rightarrow \sX$ denote the function that applies transform $g$ to the the input $x \in \sX$.
A function $f: \sX \rightarrow \sY$ is equivariant to $\sG$ if there exists a transform function on the output domain $\gS_g(y): \sY \rightarrow \sY$ such that for all input $x\in\sX$, the following equation holds:
\begin{equation}
    f\left(\gT_g \left( x \right) \right) = \gS_g\left( f\left( x \right) \right).
\end{equation}
In point cloud analysis, we consider the following types of equivariance:
\begin{itemize}
    \setlength\itemsep{0em}
    \item \textbf{Rotational and translational invariance}: The shape category of point clouds and point-wise semantic labels fall into this type of equivariance. 
    Specifically, $\sG$ is the group of rigid transform (rotation and translation) whose elements can be represented by a rotation matrix and a translation vector: $(\mR, \vt)$.
    The transform function on the input domain is defined as $\gT_{(\mR, \vt)}(\mX) = (\mR \vx_1 + \vt, \ldots, \mR \vx_N + \vt)$. The transform function on the output domain (the domain of shape categories and semantic labels) is simply an identity transform: $\gS_{(\mR, \vt)}(\vc) = \vc$. In other words, a point cloud $\mX$'s category or semantic labels $\vc$ keeps unchanged regardless of the rotations and translations applied to $\mX$.
    
    \item \textbf{Rotational equivariance and translational invariance}: Point-wise normal vectors and velocities of physical particles fall into this class.
    In particular, $\sG$ is also the group of rigid transform. $\gT_{(\mR, \vt)}$ is defined as $\gT_{(\mR, \vt)}(\mX) = (\mR \vx_1 + \vt, \ldots, \mR \vx_N + \vt)$, and $\gS_{(\mR, \vt)}$ is defined as
    $\gS_{(\mR, \vt)}\left( \vv_1, \ldots, \vv_N \right) = \left( \mR \vv_1, \ldots, \mR \vv_N \right)$. Here, $\vv_i \in \sR^3$ can represent the point-wise normal vector or velocity in the N-Body system modeling problem.
    
    \item \textbf{Permutation equivariance and invariance}: 
    These types of equivariance are not among the interest of this work because most message passing networks are already permutation equivariant for point-wise features and invariant for global features. They can be trivially combined with the geometric equivariance discussed above. 
    We state them formally here for clarity: $\gG$ is the permutation group, and a permutation is denoted as a bijective mapping $\sigma$ on $\{1, \ldots, N\}$.
    The transform function is $\gT_\sigma(\mX) = (\vx_{\sigma(1)}, \ldots, \vx_{\sigma(N)})$.
    For point-wise features, the outputs are equivariant to permutation: $\gS^\text{eqv}_\sigma(\mY) = (\vy_{\sigma(1)}, \ldots, \vy_{\sigma(N)})$. For global features, the outputs are invariant: $\gS^\text{inv}_\sigma(\vy) = \vy$.
\end{itemize}

Based on these definitions, it is clear that invariance is a special case of equivariance. In this paper, we use both of the terms ``equivariance'' and ``invariance'' to distinguish cases, but sometimes we also use ``equivariance'' as a general concept that implies both equivariance and invariance.

\subsection{Intuition: Inherent Orientations of Points}
\label{sec:prelim:intuition}
One of the \textit{key intuitions} behind our method is that: each point in a point cloud has an inherent orientation when considering its context, although an individual point itself is fully symmetric.
As shown in \ref{fig:orientation}, the seat of the chair has at least two inherent orientations: ``UP" and ``FRONT" --- ``UP" is the orientation to which the seat faces and ``FRONT'' is the direction opposite to the back of the chair.
If the chair is represented as a point cloud, then points on the seat inherit these orientations, and we can denote the orientation using a rotation matrix $\mO_i$ where we assume the first and second column vectors represent ``UP'' and ``FRONT''. These rotation matrices are naturally equivariant to the global rotation of the chair.
Assuming point $i$ is on the seat and point $j$ is any other point in the point cloud, then $\mO_i^\intercal(\vx_j - \vx_i)$ essentially converts the difference of two points' global coordinates to the relative position of $j$ to $i$ in terms of the inherent orientation of $i$.
The first component of $\mO_i^\intercal(\vx_j - \vx_i)$ indicates the height of point $j$ relative to the seat and the second component represents how far is the point in front of the chair.
Therefore, $\mO_i^\intercal(\vx_j - \vx_i)$ is invariant to the pose of the chair and can go through arbitrary transforms without breaking invariance.
This concept is illustrated in Figure~\ref{fig:inherent}. Notice that no matter how the chair rotates, the relative coordinate of the yellow circle with respect to the FRONT-UP orientation remains unchanged. However, in most real-world applications, we have no prior knowledge about the orientation for each point. Therefore, it is necessary to design a network that can learn orientations for each point in an end-to-end fashion.

\begin{figure}
    \centering
    \includegraphics[width=0.9\linewidth]{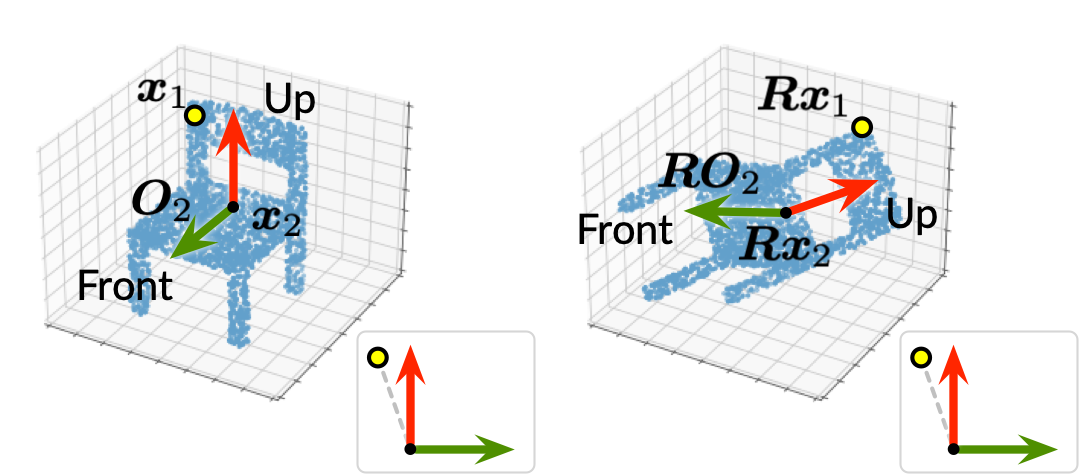}
    \caption{An illustration of the concept ``inherent orientation''. The coordinate of the yellow dot is $\vx_1$, and the coordinate of the black dot is $\vx_2$. The arrows are two basis vectors of the black dot's inherent orientation denoted as $\mO_2$. The yellow dot's position relative to $\vx_2$ and $\mO_2$ keeps unchanged despite the global rotation $\mR$.}
    \label{fig:inherent}
    \vspace{-1.2em}
\end{figure}
\section{Method}
\label{sec:method}

The central idea of this work is to learn orientations for message passing. 
This section elaborates on each building block, organized as follows:
\begin{itemize}
    \setlength\itemsep{0em}
    \item In Section~\ref{sec:method:orient}, we detail the principle of orientation learning and propose a neural network architecture to learn orientations.
    \item In Section~\ref{sec:method:msgpass}, we show how the designed architecture could be applied to the learned orientations to message passing networks and how it preserves equivariance.
    \item In Section~\ref{sec:method:output}, we formulate the readout layers for estimating both equivariant and invariant properties.
\end{itemize}

\subsection{Learning Orientations}
\label{sec:method:orient}

As introduced in Section~\ref{sec:prelim:intuition}, the orientation of a point is inherent and contextual. Point-wise orientations can be represented as orthogonal orientation matrices $\mO_i \in \sR^{3\times3}$ ($i = 1, \ldots, N$).
Therefore, in order to learn effective point-wise orientations, there should be a sub-network that (1) captures the context of each point, (2) outputs orthogonal orientation matrices, and (3) is equivariant to the global pose.
To fulfill objective (1) and (3), we employ a variant of GVP-GCN \cite{jing2020gvp, satorras2021n} (a vector-based graph neural network) over the $k$-nearest-neighbor graph of the point cloud to estimate two equivariant vector features for each point.
Subsequently, we leverage on the Gram-Schmidt process to construct orientation matrices from the vectors \cite{zhou2019continuity}, which fulfills objective (2).
Below is the description of the network for learning orientations:

Let $\mX = (\vx_1, \ldots, \vx_N)$ denote the input point cloud.
The point cloud, with all zeros as its initial scalar and vector features, is passed to the first vector-based graph convolution layer denoted as $\text{V-GConv}$:
\begin{align}
\label{eq:vgconv-layer1}
    (\mH^\ts{1}, \mV^\ts{1}) \gets \text{V-GConv}_{1}(\mX, \vzero, \vzero).
\end{align}
Here, $\mH^\ts{1} = (\vh^\ts{1}_1, \ldots, \vh^\ts{1}_N)$ is the scalar features of the point cloud, and $\mV^\ts{1} = \left[ (\vv^\ts{1}_{11}, \vv^\ts{1}_{12}, \ldots), \ldots, (\vv^\ts{1}_{N1}, \vv^\ts{1}_{N2}, \ldots) \right]$ denote its vector features.
Next, the scalar and vector features are iteratively updated through a stack of $\text{V-GConv}$ layers:
\begin{align}
    (\mH^\ts{\ell}, \mV^\ts{\ell}) \gets \text{V-GConv}_{\ell}(\mX, \mH^\ts{\ell-1}, \mV^\ts{\ell-1}).
\end{align}
The final layer outputs two vectors for each point denoted as $\mV^\ts{L} = \left[ (\vv^\ts{L}_{11}, \vv^\ts{L}_{12}), \ldots, (\vv^\ts{L}_{N1}, \vv^\ts{L}_{N2}) \right]$.
Details about $\text{V-GConv}$ layers are presented in the supplementary material.

We then employ Gram-Schmidt process to normalize and orthogonalize the two vectors for each point:
\begin{align}
\label{eq:ortho1}
    \vu_{i1} & \gets \frac{\vv^\ts{L}_{i1}}{\| \vv^\ts{L}_{i1} \|}, \\
\label{eq:ortho2}
    \vu^\prime_{i2} & \gets \vv^\ts{L}_{i2} - \langle\vv^\ts{L}_{i2},\vu_{i1}\rangle \vu_{i1}, \\
\label{eq:ortho3}
    \vu_{i2} & \gets \frac{\vu^\prime_{i2}}{\| \vu^\prime_{i2} \|},
\end{align}
Here $\langle\cdot,\cdot\rangle$ is the inner product of two vectors. 
Finally, we construct point-wise orientation matrices by combining the two orthogonal unit vectors $(\vu_{i1}, \vu_{i2})$ and their cross product:
\begin{equation}
\label{eq:ort_matrix}
    \mO_i \gets \left[ \vu_{i1}, \vu_{i2}, \vu_{i1} \times \vu_{i2} \right] \quad (i = 1, \ldots, N).
\end{equation}

Orientation matrices constructed in this way fulfill the three requirements summarized at the beginning of this section.
In particular, the $\text{V-GConv}$ layers enable the learned orientations to sense their contexts and guarantee equivariance. The Gram-Schmidt orthogonalization and Eq.\ref{eq:ort_matrix} ensure the orthogonality of learned orientations.
The equivariance of learned orientations is stated formally in the following proposition:

\begin{propos}
\label{prop:orientation}
The proposed network for learning orientations, denoted as $f_\text{ort}(\vx_1, \ldots, \vx_N) = (\mO_1,\ldots,\mO_N)$, is equivariant to rotation and invariant to translation, satisfying $f_\text{ort}\left( \mR\vx_1 + \vt, \ldots, \mR\vx_N + \vt \right) = \left( \mR\mO_1, \ldots, \mR\mO_N \right)$. 
\end{propos}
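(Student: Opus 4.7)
The plan is to reduce the statement to three sub-claims and then chain them together: (i) the V-GConv backbone produces scalar features that are rotation/translation invariant and vector features that are rotation equivariant and translation invariant, (ii) Gram-Schmidt orthonormalization preserves that equivariance, and (iii) the cross product supplying the third column is equivariant under proper rotations. The composition then gives $\mO_i \mapsto \mR\mO_i$, which is exactly the conclusion.

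First I would treat the V-GConv stack. Since the network is applied to relative displacements $\vx_j - \vx_i$ (translation is killed by the difference) and since a GVP/EGNN-style layer mixes vectors only through scalar-valued coefficients derived from invariants such as $\|\vx_j-\vx_i\|$ and inner products, an inductive argument on layer index $\ell$ shows that if the inputs transform as $\vx_i \mapsto \mR\vx_i + \vt$ with trivial scalar/vector features (as in \eqref{eq:vgconv-layer1}, where the features are zero and hence trivially invariant/equivariant), then at every layer $\mH^\ts{\ell}$ is invariant and every column of $\mV^\ts{\ell}$ is sent to $\mR$ applied to itself. This reduces to the known equivariance of the underlying GVP-GCN update rules, which I would state precisely and then invoke; I do not plan to re-derive the per-layer computation.

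Next I would verify that the orthonormalization in \eqref{eq:ortho1}--\eqref{eq:ortho3} commutes with the action of $\mR$. Assuming $\vv^\ts{L}_{i1}\mapsto \mR\vv^\ts{L}_{i1}$ and $\vv^\ts{L}_{i2}\mapsto \mR\vv^\ts{L}_{i2}$, the identities $\|\mR\vv\| = \|\vv\|$ and $\langle \mR\va, \mR\vb\rangle = \langle \va, \vb\rangle$, which hold for any orthogonal $\mR$, imply $\vu_{i1}\mapsto \mR\vu_{i1}$, $\vu'_{i2}\mapsto \mR\vu'_{i2}$ and finally $\vu_{i2}\mapsto \mR\vu_{i2}$; each follows by direct substitution. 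For the third column I would use the standard identity $(\mR\va)\times(\mR\vb) = (\det\mR)\,\mR(\va\times\vb)$, which reduces to $\mR(\va\times \vb)$ because $\mR$ is a proper rotation with $\det\mR = 1$. Stacking the three columns gives $\mO_i \mapsto [\mR\vu_{i1},\mR\vu_{i2},\mR(\vu_{i1}\times\vu_{i2})] = \mR\mO_i$, establishing the proposition.

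The main obstacle I anticipate is the first step: being precise about what ``V-GConv is equivariant'' means given that the layer is applied to the raw coordinates $\mX$ rather than only to relative displacements. I would address this by factoring the first layer's computation through $\vx_j - \vx_i$ (translation drops out immediately) and by invoking the vector-channel update rule, which composes input vectors with scalar coefficients depending only on invariant quantities, so that rotating the inputs commutes with the layer. Once that inductive base and step are written out cleanly, the rest of the proof is essentially a sequence of one-line checks that $\mR$ commutes with normalization, inner products, and the cross product, which I would present compactly.
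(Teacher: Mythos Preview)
Your proposal is correct and follows essentially the same decomposition as the paper: establish equivariance of the stacked $\operatorname{V\text{-}GConv}$ layers (via the GVP equivariance property and the zero initialization), then verify that Gram--Schmidt and the cross-product construction each commute with $\mR$. If anything, you are slightly more careful than the paper in explicitly invoking $(\mR\va)\times(\mR\vb) = (\det\mR)\,\mR(\va\times\vb)$ and the requirement $\det\mR=1$, which the paper's proof uses tacitly.
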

\begin{proof}
See the supplementary material.
\end{proof}

\noindent\textbf{Extension: Learning Global Orientations}
The network for learning point-wise orientations can be adapted for learning global orientations.
We first construct a hierarchy of the point cloud: $\mX^\ts{L} = \{ \bar{\vx} \} \subset \mX^\ts{L-1} \subseteq \ldots \subseteq \mX^\ts{1} \subseteq \mX$, where $\bar{\vx} = \frac{1}{N}\sum \vx_i$.
The $\ell$-th $\text{V-GConv}$ layer ($\ell < L$) finds the $k$-nearest-neighbors of a point $\vx_i^\ts{\ell}$ in $\mX^\ts{\ell-1}$ to aggregate information. The scalar and vector features of $\bar{\vx}$ is obtained via global pooling over $\mX^\ts{L-1}$'s features.
Finally, the orientation constructed from $\bar{\vx}$'s vectors feature serves as the global orientation $\mO_\text{glob}$.
As the geometric center $\bar{\vx}$ is equivariant, it is straightforward to see that $\mO_\text{glob}$ is invariant to translation and equivariant to rotation.

\begin{figure}
    \centering
    \includegraphics[width=0.95\linewidth]{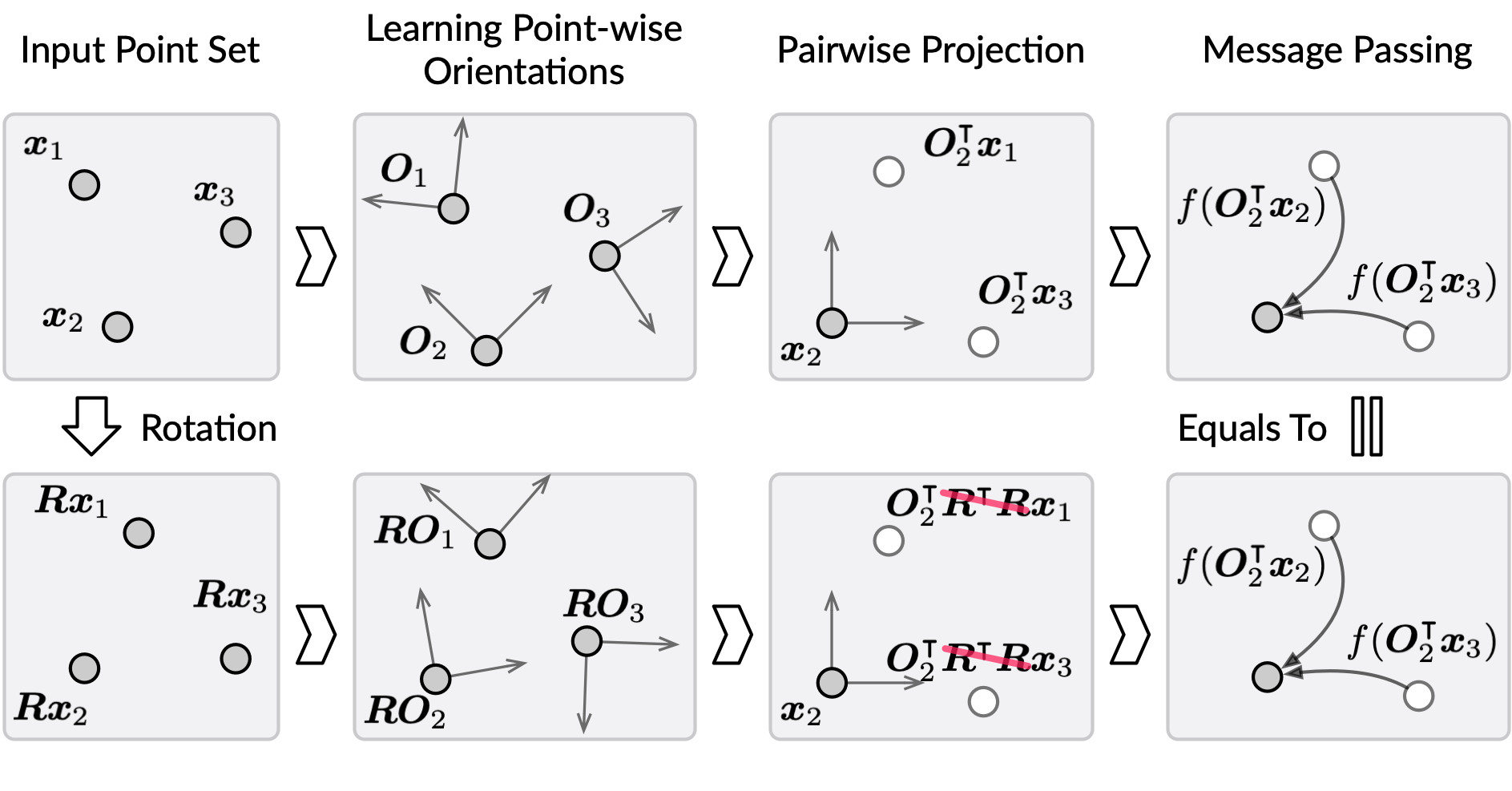}
    \caption{An illustration of message passing with orientations and its equivariant nature. Note that a proper rotation matrix $\mR$ should satisfy the constraint $\mR^\intercal\mR = \mI$.}
    \label{fig:invariant-mp}
    \vspace{-1.2em}
\end{figure}

\subsection{Equivariant Message Passing with Learned Orientations}
\label{sec:method:msgpass}
Typical message passing layers for point clouds use coordinate differences $\vx_j - \vx_i$ to compose the message passed from one point to another. They can be formulated in the following general form:
\begin{equation}
    \vh^\ts{\ell+1}_i \gets \underset{j \in \gN(i)}{\operatorname{Agg}}  H\left( \vh^\ts{\ell}_i, \vh^\ts{\ell}_j, \vx_j - \vx_i \right),
\end{equation}
where $\operatorname{Agg}$ is a permutation-invariant aggregation operator (common choices include max, sum, and mean). $H(\cdot)$ is an arbitrary neural network such as an MLP.
To achieve invariance, we project $\vx_j - \vx_i$ using the learned orientation $\mO_i$ and rewrite the formula as:
\begin{equation}
\label{eq:oriented_general}
    \vh^\ts{\ell+1}_i \gets \underset{j \in \gN(i)}{\operatorname{Agg}} H\left( \vh^\ts{\ell}_i, \vh^\ts{\ell}_j, \mO^\intercal_i(\vx_j - \vx_i) \right).
\end{equation}

By applying $\mO^\intercal_i$ to $\vx_j - \vx_i$, we decouple the coordinate differences to the pose of the point cloud --- the rotated relative position $\mO^\intercal_i(\vx_j - \vx_i)$ is invariant to global rotations of the point cloud. 
Figure \ref{fig:invariant-mp} illustrates this message passing scheme.
Formally, we have:
\begin{propos}
\label{prop:invariant-feature}
Under the assumption that $(\mO_1, \ldots, \mO_N)$ comes from an equivariant function $g$ satisfying $g(\mR\mX + \vt) = (\mR\mO_1, \ldots, \mR\mO_N)$, and that $\mH$ comes from an invariant function $h$ satisfying $h(\mR\mX + \vt) = \mH$,
the message passing formula in Eq.\ref{eq:oriented_general}, denoted as $f^\ts{\ell}_\text{mp}$, is invariant to rotation and translation and satisfies $f^\ts{\ell}_\text{mp}\left(\mR\mX+\vt \right) = f^\ts{\ell}_\text{mp}\left(\mX \right)$.
\end{propos}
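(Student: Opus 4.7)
The plan is to show that each of the three arguments passed to the inner network $H$ is invariant under a rigid transform of the input, so that the aggregated output is invariant as well. Since the aggregation operator $\operatorname{Agg}$ is permutation-invariant and the neighborhood structure $\gN(i)$ is unchanged by a rigid motion (rigid motions preserve pairwise distances, hence preserve the $k$-nearest-neighbor graph), it suffices to verify invariance argument-by-argument.

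First, I would fix an arbitrary rotation-translation pair $(\mR, \vt)$ and let $\mX' = \mR\mX + \vt$ denote the transformed point cloud. The scalar features $\vh^\ts{\ell}_i$ and $\vh^\ts{\ell}_j$ come from $h$, which is invariant by assumption, so their values on $\mX'$ equal their values on $\mX$. The orientations, by hypothesis, transform as $\mO_i' = \mR\mO_i$. The coordinate difference becomes
\begin{equation*}
\vx_j' - \vx_i' = (\mR\vx_j + \vt) - (\mR\vx_i + \vt) = \mR(\vx_j - \vx_i),
\end{equation*}
so the translation drops out automatically.

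The critical step is then the composition
\begin{equation*}
(\mO_i')^\intercal (\vx_j' - \vx_i') = (\mR\mO_i)^\intercal \mR(\vx_j - \vx_i) = \mO_i^\intercal \mR^\intercal \mR (\vx_j - \vx_i) = \mO_i^\intercal(\vx_j - \vx_i),
\end{equation*}
using $\mR^\intercal \mR = \mI$ for a proper rotation. Thus the projected relative position is unchanged, and so all three inputs to $H$ on $\mX'$ match their values on $\mX$ for each pair $(i,j)$.

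Finally, combining these three invariances, $H\bigl(\vh_i^\ts{\ell}, \vh_j^\ts{\ell}, (\mO_i')^\intercal(\vx_j'-\vx_i')\bigr) = H\bigl(\vh_i^\ts{\ell}, \vh_j^\ts{\ell}, \mO_i^\intercal(\vx_j-\vx_i)\bigr)$, and aggregating over $\gN(i)$ — which is itself preserved because rigid transforms are isometries — gives the desired identity $f^\ts{\ell}_\text{mp}(\mR\mX + \vt) = f^\ts{\ell}_\text{mp}(\mX)$. There is no genuine obstacle here; the only point requiring a brief justification is the invariance of the neighborhood structure $\gN(i)$, which follows from the fact that distances are preserved by rigid motions, and the key algebraic identity is the cancellation $(\mR\mO_i)^\intercal \mR = \mO_i^\intercal$.
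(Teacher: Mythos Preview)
Your proof is correct and follows essentially the same approach as the paper: both hinge on the cancellation $(\mR\mO_i)^\intercal \mR(\vx_j-\vx_i) = \mO_i^\intercal(\vx_j-\vx_i)$ via $\mR^\intercal\mR=\mI$, together with the assumed invariance of the scalar features. Your treatment is in fact slightly more careful than the paper's, since you explicitly justify that the neighborhood $\gN(i)$ is preserved under rigid motions, a point the paper leaves implicit.
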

\begin{proof}
See the supplementary material.
\end{proof}

Notice that the output feature $\vh^\ts{\ell+1}_i$ is invariant to rotations despite the form of $H$. This allows arbitrary flexible network design for $H$, while previous work imposes strong prior assumptions on $H$ such as the linear combination of vectors, handcrafted invariant measures, or spherical harmonics filtering. 
In addition, projecting $\vx_j - \vx_i$ using $\mO^\intercal_i$ involves the inner product between $\vx_j - \vx_i$ and $\mO^\intercal_i$'s column vectors which explicitly encodes angular geometric information.

\paragraph{Augmented Message Passing Networks}
In this section, we demonstrate how to endow classical point-based message passing networks with equivariance by using RS-CNN and DGCNN as examples.

\textbf{RS-CNN}.
The message passing formula of vanilla RS-CNN is:
\begin{equation}
    \vr_{ij}\gets \vx_j - \vx_i,
\end{equation}
\begin{equation}
    \vh^\ts{\ell+1}_i \gets \underset{j \in \gN(i)}{\max} \left( M\left(\| \vr_{ij} \|, \vr_{ij}\right) \odot \vh^\ts{\ell}_j \right),
\end{equation}
where $\max$ denotes the max pooling operation, $M$ is an MLP network, and $\odot$ denotes element-wise multiplication. We can rewrite RS-CNN by incorporating Eq.\ref{eq:ort_matrix} as the follow:
\begin{equation}
    \vh^\ts{\ell+1}_i \gets \underset{j \in \gN(i)}{\max} \left( M\left(\| \mO^\intercal_i\vr_{ij} \|, \mO^\intercal_i\vr_{ij}\right) \odot \vh^\ts{\ell}_j \right),
\end{equation}
In vanilla RS-CNN, the initial point-wise features $\vh^\ts{0}_i$ are set to the points' 3D coordinates, which would break equivariance in our setting.
As a remedy, we can either initialize $\vh^\ts{0}_i$ with a constant or coordinates aligned to the global orientation $\mO_\text{glob}$. 

\textbf{DGCNN}.
The formulations of the first and subsequent layers of vanilla DGCNN are:
\begin{align}
\label{eq:vanilla-dgcnn-0}
    \vh^\ts{1}_i & \gets \underset{j \in \gN(\vx_i)}{\max} \sigma \left( M_0(\vx_j - \vx_i, \vx_i) \right), \\
\label{eq:vanilla-dgcnn-l}
    \vh^\ts{\ell+1}_i & \gets \underset{j \in \gN(\vh^\ts{\ell}_i)}{\max} \sigma \left( M_\ell(\vh^\ts{\ell}_j - \vh^\ts{\ell}_i, \vh^\ts{\ell}_i) \right),
\end{align}
where $\sigma$ is the activation function and $M_\ell$ is a MLP layer.
To obtain the equivariant property, we modify Eq.\ref{eq:vanilla-dgcnn-0} to:
\begin{equation}
\label{eq:ort-dgcnn-0}
    \vh^\ts{1}_i \gets \underset{j \in \gN(\vx_i)}{\max} \sigma \left( M_0(\mO^\intercal_i(\vx_j - \vx_i)) \right).
\end{equation}
Here, we ignore $\vx_i$ from the message passing formula, but we can still apply $\mO_\text{glob}^\intercal$ to $\vx_i$ to decouple the point's coordinate from global rotation.

The above formulations are kept simple for clarity, we introduce the following extensions that may further increase the power of the network:

\noindent\textbf{Extension 1: Multiple Orientations} 
The relative coordinates can be projected using multiple orientation matrices. Differently projected relative coordinates can be concatenated to feed to the message network $H(\cdot)$.

\noindent\textbf{Extension 2: Incorporating Global Information}
In some cases, global spatial information can be useful. We can project global coordinates using $\mO_\text{glob}$ to obtain global invariant canonical coordinates and use them as point-wise invariant features.

\subsection{Estimating Equivariant and Invariant Properties}
\label{sec:method:output}
So far, we have introduced how to extract equivariant point-wise features in previous sections.
The last step is to predict properties based on the learned features.
We divide properties into two classes: invariant properties and equivariant properties.
For example, invariant properties include the category of a point cloud, point-wise semantics, and so on. 
They are independent of the global pose of the point cloud and therefore belong to the first case in Section~\ref{sec:prelim:equiv}.
Equivariant properties such as normal vectors and velocities rotate together with the global rotation, and therefore they belong to the second case discussed in Section~\ref{sec:prelim:equiv}.

\paragraph{Invariant Property Estimation}
Point-wise features output by the proposed message passing layers are already invariant according to Propos.\ref{prop:invariant-feature}.
Therefore, to predict point-wise invariant property, we can directly feed the features into an arbitrary neural network (commonly an MLP) to estimate the property.
For global properties, we can perform pooling operations over all the features. The pooled global features are clearly invariant and we can safely feed it to a neural network to estimate global invariant properties.

\paragraph{Equivariant Property Estimation}
The learned point-wise features are invariant to rotation. Hence, in order to predict equivariant properties, we leverage on the learned orientations again.
Specifically, we first use an MLP to transform point-wise features to 3D vectors. 
Then, we apply the orientations to the vectors to obtain the equivariant vector properties.
These two steps can be formulated as:
\begin{align}
\label{eq:equiv-property-1}
    \vp_i & \gets \operatorname{MLP}\left( \vh^\ts{L}_i \right),\\
\label{eq:equiv-property-2}
    \ve_i & \gets \mO_i \vp_i,
\end{align}
where $\ve_i$ is the desired equivariant for point $i$.
The equivariance of $\ve_i$ is stated formally in the following proposition:
\begin{propos}
\label{prop:equiv-property}
Under the assumption that $(\mO_1, \ldots, \mO_N)$ comes from an equivariant function $g$ satisfying $g(\mR\mX + \vt) = (\mR\mO_1, \ldots, \mR\mO_N)$, and that $\mH^\ts{L}$ comes from an invariant function $h$ satisfying $h(\mR\mX + \vt) = \mH^\ts{L}$,
The property estimator $f_\text{prop}$ defined by Eq.\ref{eq:equiv-property-1} and Eq.\ref{eq:equiv-property-2} is equivariant to rotation and invariant to translation, 
satisfying $f_\text{prop}\left(\mR\mX+\vt \right) = \mR f_\text{prop}\left(\mX \right)$.
\end{propos}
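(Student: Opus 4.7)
The plan is to unpack the definition of $f_\text{prop}$, substitute the transformed input $\mR\mX + \vt$, and then invoke the two assumed transformation laws for $\mH^{(L)}$ and $(\mO_1, \ldots, \mO_N)$ to track how each intermediate quantity changes. Since the proposition essentially states a compatibility between an invariant object and an equivariant object under multiplication, the argument should be short and almost purely algebraic; the real content is bookkeeping.

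First I would write out $f_\text{prop}(\mR\mX + \vt)$ component-wise, point by point. For each $i$, the new intermediate scalar/vector-feature quantity is $\text{MLP}(\vh_i^{(L),\text{new}})$, where $\vh_i^{(L),\text{new}}$ denotes the $i$-th entry of $h(\mR\mX + \vt)$. By the invariance assumption on $h$, we have $\vh_i^{(L),\text{new}} = \vh_i^{(L)}$, so $\vp_i^\text{new} = \text{MLP}(\vh_i^{(L)}) = \vp_i$. In other words, the MLP step is immune to the rigid transform because its input is already invariant.

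Next I would apply the orientation step. By the equivariance assumption on $g$, the new orientation at point $i$ is $\mO_i^\text{new} = \mR\mO_i$. Combining these two observations,
\begin{equation}
    \ve_i^\text{new} = \mO_i^\text{new}\,\vp_i^\text{new} = (\mR\mO_i)\,\vp_i = \mR(\mO_i \vp_i) = \mR \ve_i,
\end{equation}
using associativity of matrix-vector multiplication. Collecting over all $i$ yields $f_\text{prop}(\mR\mX + \vt) = \mR f_\text{prop}(\mX)$, which is exactly the claimed rotational equivariance and translational invariance (translation drops out because neither $\mH^{(L)}$ nor the $\mO_i$'s depend on $\vt$ in a way that survives the combined transform).

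I do not expect a real obstacle here; the proof is essentially a one-line manipulation once the two input transformation laws are in place. The only care needed is to emphasize that the MLP is applied entrywise (so no cross-point mixing destroys the per-point structure) and that $\vp_i$ is treated as a plain vector in $\sR^3$ on which $\mO_i$ acts by left multiplication, so that associativity $(\mR\mO_i)\vp_i = \mR(\mO_i\vp_i)$ is unambiguous. The translational invariance piece requires no separate argument beyond noting that $\vt$ does not appear on the right-hand side of either assumed identity.
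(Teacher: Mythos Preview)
Your proposal is correct and follows essentially the same approach as the paper: a direct algebraic check that $\ve_i^\text{new} = (\mR\mO_i)\,\vp_i = \mR\ve_i$, using the invariance of $\mH^{(L)}$ to fix $\vp_i$ and the equivariance of the orientations to supply the factor $\mR$. The paper's own proof is in fact a single displayed line encoding exactly this computation, so your version is just a more explicit rendering of the same argument.
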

\begin{proof}
See the supplementary material.
\end{proof}
By combining proposition \ref{prop:orientation}, \ref{prop:invariant-feature}, and \ref{prop:equiv-property}, we can easily see that message passing networks in the proposed framework are equivariant.

\section{Experiments}
\label{sec:experiments}

We evaluate our method on three types of tasks including point cloud analysis (Section~\ref{sec:expr:pointcloud}), physical modeling (Section~\ref{sec:expr:nbody}), and analytical study (Section~\ref{sec:expr:analytical}). In the point cloud analysis, the shape classification task and part segmentation task examine the method's capability of predicting global and point-wise invariant properties. The normal estimation task evaluates the model's ability on estimating equivariant properties. The task of physical modeling is to predict particle velocities in an n-body system, which heavily relies on the model's equivariance to achieve accurate predictions. In the end, we provide more insight into the behavior of our model by visualizing learned orientations of different 3D objects.

\subsection{Point Cloud Analysis}
\label{sec:expr:pointcloud}

\begin{table*}
    \centering
    \caption{Point cloud segmentation results in the IoU (\textit{Inter-over-Union}). In z/SO(3) setting, our model outperforms other baselines on 11 out of 16 shapes.} 
    \label{tab:partseg}
    \begin{adjustbox}{width=1.0\textwidth}
    \begin{tabular}{l|cccc cccc cccc cccc c}
\toprule
z/SO(3) & plane & bag & cap & car & chair & earph. & guitar & knife & lamp & laptop & motor & mug & pistol & rocket & skate & table \\ 
\hline

PointNet \cite{qi2017pointnet} & 40.4 & 48.1 &  46.3 &  24.5 &  45.1 &  39.4 &  29.2 &  42.6 &  52.7 &  36.7 &  21.2 &  55.0 &  29.7 &  26.6 &  32.1 &  35.8 \\

PointNet++ \cite{qi2017pointnet++}& 51.3 & 66.0 &  50.8 &  25.2 &  66.7 &  27.7 &  29.7 &  65.6 &  59.7 &  70.1 &  17.2 &  67.3 &  49.9 &  23.4 &  43.8 &  57.6 \\

PointCNN \cite{li2018pointcnn} & 21.8 & 52.0 &  52.1 &  23.6 &  29.4 &  18.2 &  40.7 &  36.9 &  51.1 &  33.1 &  18.9 &  48.0 &  23.0 &  27.7 &  38.6 &  39.9 \\

DGCNN \cite{wang2019dynamic} & 37.0 & 50.2 &  38.5 &  24.1 &  43.9 &  32.3 &  23.7 &  48.6 &  54.8 &  28.7 &  17.8 &  74.4 &  25.2 &  24.1 &  43.1 &  32.3 \\

ShellNet \cite{zhang2019shellnet} & 55.8 & 59.4 &  49.6 &  26.5 &  40.3 &  51.2 &  53.8 &  52.8 &  59.2 &  41.8 &  28.9 &  71.4 &  37.9 &  49.1 &  40.9 &  37.3 \\ 
\hline

RI-Conv \cite{zhang2019rotation} & 80.6 & 80.0 &  70.8 &  68.8 &  86.8 &  70.3 &  87.3 &  84.7 &  77.8 &  80.6 &  57.4 &  91.2 &  71.5 &  52.3 &  66.5 &  78.4 \\

GC-Conv \cite{zhang2020global} & 
    80.9 & \bf 82.6 & 81.0 & 70.2 & 88.4 & 70.6 & 87.1 & \bf 87.2 & 
    \bf 81.8 & 78.9 & 58.7 & 91.0 & 77.9 & 52.3 & 66.8 & 80.3 \\

RI-Fwk. \cite{li2021rotation} & {81.4} & {82.3} &  \bf 86.3 &  {75.3} &  {88.5} &  {72.8} &  {90.3} &  82.1 &  {81.3} &  {81.9} &  {67.5} &  {92.6} &  {75.5} & {54.8}  & {75.1} &  {78.9} \\

\hline

Ours & 
    \bf 81.7 & 79.0 & 85.0 & \bf 78.1 & \bf 89.7 & \bf 76.5 & \bf 91.6 & 85.9 &
    81.6 & \bf 82.1 & \bf 67.6 & \bf 95.0 & \bf 79.6 & \bf 64.4 & \bf 76.9 & \bf 80.7
    \\
\bottomrule

\end{tabular}
    \end{adjustbox}
\end{table*}

\begin{table}
    \centering
    \caption{Consine distance between the predicted and ground truth normals. Our model outperforms all the baselines by a clear margin. In addition, compared to non-equivariant baselines, our model shows consistent performance thanks to equivariance.} 
    \label{tab:normal}
    \begin{adjustbox}{width=0.85\linewidth}
    \begin{tabular}{l|ccc}
\toprule
Methods & $z/z$ & $z/\text{SO}(3)$ & $\text{SO}(3)/\text{SO}(3)$  \\
\midrule
PointNet++ \cite{qi2017pointnet++} & 0.34 & 0.81 & 0.55 \\
RS-CNN \cite{liu2019relation} & 0.26 & 0.83 & 0.50 \\
DGCNN \cite{wang2019dynamic} & 0.29 & 0.32 & 0.22 \\ 
\midrule
RI-Conv \cite{zhang2019rotation} & 1.33 & 1.30 & 1.30 \\
GC-Conv \cite{zhang2020global} & 0.42 & 0.44 & 0.42 \\
\midrule
Ours & \bf 0.20 & \bf 0.20 & \bf 0.20 \\

\bottomrule
\end{tabular}

    \end{adjustbox}
\end{table}

\subsubsection{Classification (Invariant)}

A point cloud's category is its \textit{invariant} property. 
This task can demonstrate the model's capability of learning invariant representations.

\vspace{-1em}

\paragraph{Setup}
We use the ModelNet40 \cite{wu2015modelnet} dataset and its official train-test split for the point cloud classification task.
The dataset consists of 12,311 shapes from 40 different categories, of which 2,468 shapes are left out for test. 
Each point cloud contains 1,024 points.

Following the setup in the previous work \cite{deng2021vector,li2021rotation,zhang2020global}, we adopt three different train-test rotation settings: 
(1) $z/z$: both training and test point clouds are rotated around the gravitational axis; 
(2) $z/\text{SO}(3)$: training point clouds are rotated around the gravitational axis and test point clouds are rotated arbitrarily. This setting examines the model's quality of equivariance-by-construction.
(3) $\text{SO}(3)/\text{SO}(3)$: both training and test point clouds are rotated arbitrarily. 

\vspace{-1em}

\paragraph{Results}
\begin{table}
    \centering
    \caption{Classification accuracy on ModelNet40. The upper rows are non-equivariant models, and the lower rows are equivariant models. The performance of our model is on par with equivariant baselines.} 
    \vspace{-0.8em}
    \label{tab:cls}
    \begin{adjustbox}{width=0.9\linewidth}
    \begin{tabular}{l|ccc}
\toprule
Methods & $z/z$ & $z/\text{SO}(3)$ & $\text{SO}(3)/\text{SO}(3)$  \\
\midrule
PointNet \cite{qi2017pointnet}  & 85.9 & 19.6 & 74.7 \\
PointNet++ \cite{qi2017pointnet++}  & 91.8 & 28.4 & 85.0 \\
RS-CNN \cite{liu2019relation} & 90.3 & 48.7 & 82.6 \\
DGCNN \cite{wang2019dynamic}    & 90.3 & 33.8 & 88.6 \\

\midrule

RI-Conv \cite{zhang2019rotation}    & 86.5 & 86.4 & 86.4 \\
GC-Conv \cite{zhang2020global}  & 89.0 & 89.1 & 89.2 \\
\midrule
Ours-RSCNN & 87.6 & 87.6 & 87.5 \\
Ours-DGCNN & 88.4 & 88.4 & 88.9 \\

\bottomrule
\end{tabular}

    \end{adjustbox}
    \vspace{-1.0em}
\end{table}

We compare our model with both equivariant and non-equivariant baselines.
Table \ref{tab:cls} shows the test classification accuracy of different methods. 
Our model achieves competitive performance compared to other baseline models.

\subsubsection{Part Segmentation (Invariant)}
\label{sec:expr:pointcloud:partseg}

Point-wise labels are also an \textit{invariant} property of a point cloud. This task can examine the model's performance on modeling invariant detail properties.

\vspace{-1em}

\paragraph{Setup}
We evaluate the models on the ShapeNet \cite{yi2016scalable} dataset which consists of 16 categories with 16,881 shapes, and are labeled in 50 parts in total. Following the convention, we sample 2,048 points for each shape.
We use two train-set rotation settings for this task: z/SO(3) and SO(3)/SO(3).
Our model employs the equivariant adaptation DGCNN (see Section \ref{sec:method:msgpass}) as the backbone.
We calculate the IoU (Inter-over-Union) metric to measure the segmentation quality for each category.

\vspace{-1em}

\paragraph{Results}
Table \ref{tab:partseg} summarizes the quantitative comparisons with baseline methods in z/SO(3) setting. The results of SO(3)/SO(3) can be found in the supplementary material.
Our model outperforms all the baseline models on 12 out of 16 categories in z/SO(3) setting and 10 of 16 categories in SO(3)/SO(3) setting.
This improvement demonstrates that our equivariant model is effective in learning fine-grained details.

\subsubsection{Normal Estimation (Equivariant)}

Normal vectors are \textit{equivariant} property of a point cloud as they rotate together with the global rotation of the point cloud.
This task tests the models' capabilities of modeling equivariant properties.

\vspace{-1em}
\paragraph{Setup}
We validate our model and other baselines using ModelNet40\cite{wu2015modelnet}, where each point in the point cloud is labeled with a 3D normal vector.
We use the same data split and rotation settings as the ones used in the classification task. Our model uses the equivariant adaptation of DGCNN (see Section~\ref{sec:method:orient} for detail) as the backbone network. 
Hyper-parameters and other implementation details are put to supplementary materials.

\vspace{-1em}
\paragraph{Results}
Table \ref{tab:normal} shows the test cosine-distance errors ($1-\cos\langle \vn_\text{true}, \vn_\text{pred} \rangle$) \cite{zhang2020global} of all the methods to be compared on the normal estimation benchmark. 
It is obvious that the performance of other non-equivariant models degrades when we apply more flexible rotation on the point clouds.
Thanks to the equivariance, our model presents consistency of performance on different settings of train-test rotation, and outperforms both non-equivariant and equivariant baselines by a clear margin. This result reveals the inherent generalizability of our model with respect to arbitrary poses and confirms the benefit of using equivariant models to estimate equivariant properties.

\subsection{N-Body System Modeling}
\label{sec:expr:nbody}

\begin{table}
    \centering
    \caption{Mean squared errors (MSE) of predicted future coordinates in N-body system modeling task.} 
    \vspace{-0.8em}
    \label{tab:nbody}
    \begin{adjustbox}{width=0.7\linewidth}
    \begin{tabular}{l|cc}
\toprule
Methods & $10$ Particles & $20$ Particles  \\
\midrule
Linear(vel) \cite{satorras2021n} & 0.339 & 0.408 \\
GNN \cite{kipf2016semi} & 0.183 & 0.256 \\
\midrule
SE(3) Tr. \cite{fuchs2020se3} & 0.351 & 0.371 \\
TFN \cite{thomas2018tensor} & 0.236 & 0.273\\
EGNN \cite{satorras2021n} & 0.126 & 0.212 \\
\midrule
Ours & 0.103 & 0.206  \\

\bottomrule
\end{tabular}
    \end{adjustbox}
\end{table}

Modeling a dynamic particle system is a fundamental yet challenging task in many other research areas.
It is a also prototype task of many other specific tasks in various domains, such as molecule simulation.
Though controlled by simple physical rules, N-body dynamic systems exhibit complex dynamics and it is expensive to predict their future states solely based on simulation.
Therefore, the goal of this task is to predict each particle's future states (position and velocity) based on its current states. 

\vspace{-1em}
\paragraph{Setup}
We test our model and baselines on two systems: 10-particle system and 20-particle system.
For each system, we run the simulation program provided by \cite{fuchs2020se3} to collect particle trajectories. 
We collected 3,000 trajectories for training and 2,000 for test. Each trajectory has 1,000 timesteps.
The target is to predict the coordinates of all the particles after 1,000 timesteps, given their current positions, velocities, and types (positively charged or negatively charged).
We use the mean squared error (MSE) of the predicted future positions as the evaluation metric. 
Hyper-parameters and other implementation details of our model are put to supplementary materials.
Our baselines include a simple linear model introduced in \cite{satorras2021n}, a non-equivariant graph neural network model \cite{kipf2018neural}, as well as equivariant models including tensor field networks \cite{thomas2018tensor}, SE(3) transforms \cite{fuchs2020se3}, and EGNN \cite{satorras2021n}.

\vspace{-1em}
\paragraph{Results}

As shown in Table \ref{tab:normal}, our model exhibits competitive performance, suggesting its power in modeling physical interactions among points.

\begin{figure}
    \centering
    \includegraphics[width=1.0\linewidth]{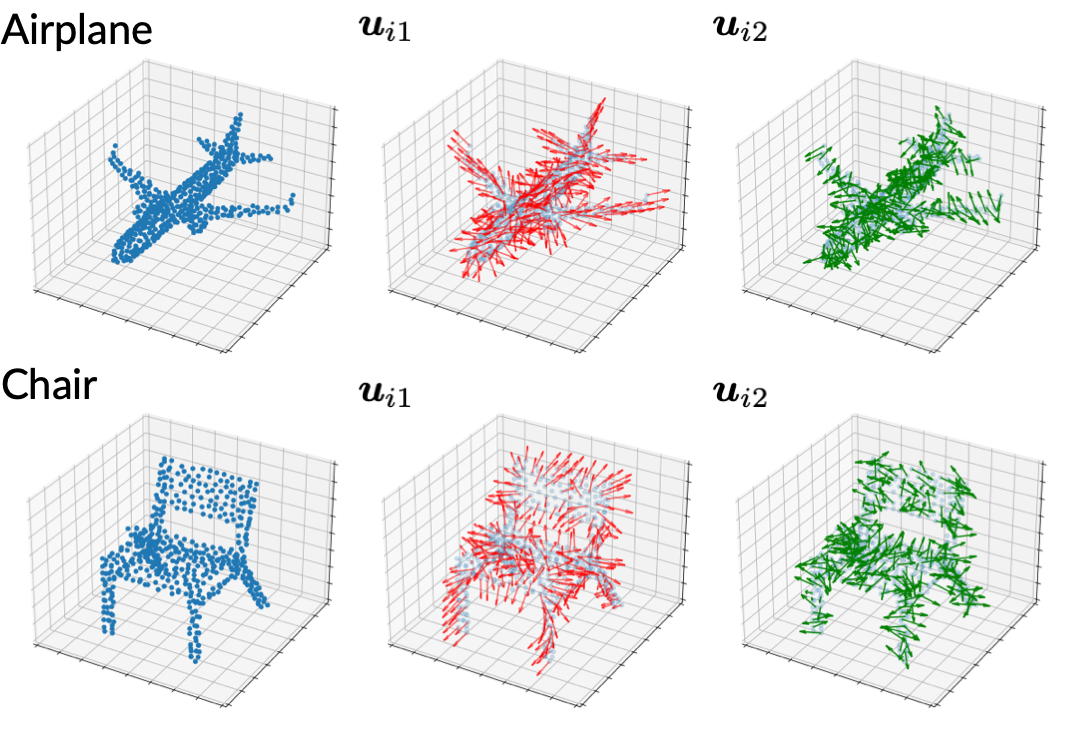}
    \caption{\textit{Left}: Input point clouds.~\textit{Middle \& Right}: The first and second column vectors of learned orientations in the classification model.}
    \label{fig:orientation}
    \vspace{-1.2em}
\end{figure}

\subsection{Visualizing Orientations}
\label{sec:expr:analytical}

We visualize two examples of learned orientations from the point cloud classification model in Figure~\ref{fig:orientation}.
We find that the learned orientations are correlated with the global structure. For example, the first orientation vectors (red arrows) on the airplane's wings and tails head to the direction to which the wings and tails stretch and the second orientation vectors (green arrows) are perpendicular to the wings.
In addition, the first orientation vectors (red) on the chair's both front legs consistently head to the front direction, indicating the learned orientations' awareness of structural similarities. 
\section{Conclusions and Limitations}
\label{sec:conclusion}
In this paper, we introduce a scheme for equivariant point set analysis.
The core ingredient is to learn point-wise orientations and project neighbor coordinates using the learned orientation when aggregating information.
By extensive experiments, we demonstrate our model's effectiveness and generality.

However, the proposed method requires an additional sub-network to estimate point-wise orientations. This is a non-negligible overhead and it is important to reduce the cost of orientation learning.
Another concern of the method is its robustness. It is unclear to what extent the learned orientations are vulnerable to noisy data, which is inevitable in real-world settings. It is necessary to study the robustness in future work.
Lastly, this paper only demonstrates few applications, it would be valuable to examine the method in other more challenging tasks such as molecular modeling.

\section*{Acknowledgement}
This work was supported by National Key R\&D Program of China No. 2021YFF1201600.

{\small
\bibliographystyle{ieee_fullname}
\bibliography{references}
}


\clearpage
\newpage
\onecolumn
\appendix
\centerline{\Large\bf Supplementary Material}%
\vspace{0.8em}
\centerline{\large\bf Equivariant Point Cloud Analysis via Learning Orientations for Message Passing}%

\section{Orientation Learning Networks}

This section elaborates on the neural network architecture for learning orientations described in Section~\ref{sec:method:orient}.

The building block of the network is GVP-based graph convolutional networks (GVP-GCN) \cite{jing2020gvp}.
To understand GVP-GCN, we first briefly introduce GVP (geometric vector perceptron).
GVP is a generalization of the traditional perceptron (a linear layer followed by a non-linear function).
While traditional perceptrons take only an array of scalar features as input and outputs another array of scalar values, GVP takes an array of scalars and an array of 3D vectors as input, and output another array of scalars and array of vectors:
\begin{equation}
    (\vs^\prime, \mV^\prime) \gets \operatorname{GVP}(\vs, \mV), \quad \text{where } \vs = [s_1, \ldots, s_n] \in \sR^{n} \text{ and } \mV = [\vv_i, \ldots, \vv_m] \in \sR^{3\times m}.
\end{equation}
The most relevant property of $\text{GVP}$ is its equivariance to rotation:
\begin{equation}
\label{eq:gvp-eq}
    (\vs^\prime, \mR\mV^\prime) \gets \operatorname{GVP}(\vs, \mR\mV).
\end{equation}
The above formulation and equivariance property is sufficient for building the rest part of our model.
Therefore, if the reader is interested in more technical details and formal proofs about GVP, we refer the reader to \cite{jing2020gvp}.

Next, we introduce GVP-based graph convolutional networks (GVP-GCN).
Let $\vh_i$ denote the scalar features, $\mV_i$ denote the vector features of point $i$, and $\vh_{ij}$, $\mV_{ij}$ denote the scalar and vector of edge $ij$.
The message passing step of GVP-GCN is defined as:
\begin{align}
    (\vh_{(i \gets j)}, \mV_{(i \gets j)}) & := \operatorname{GVP}\left( \text{concat}(\vh_i, \vh_{ij}), \text{concat}(\mV_i, \mV_{ij}) \right), \\
    (\vh^\prime_{i}, \mV^\prime_{i}) & \gets (\vh_{i}, \mV_{i}) + \frac{1}{|\gN(i)|}\sum_{j \in \gN(i)} (\vh_{(i \gets j)}, \mV_{(i \gets j)}), \\
    (\vh^\text{out}_{i}, \mV^\text{out}_{i}) & \gets (\vh^\prime_{i}, \mV^\prime_{i}) + \operatorname{GVP}(\vh^\prime_{i}, \mV^\prime_{i}).
\end{align}
For short, we denote the above message passing scheme as $(\vh^\text{out}_{i}, \mV^\text{out}_{i}) \gets \operatorname{GVP-GConv}(\vh_i, \vh_{ij}, \mV_i, \mV_{ij})$.
As proven in \cite{jing2020gvp}, $\operatorname{GVP-GConv}$ is equivariant to rotation:
\begin{align}
    (\vh^\text{out}_{i}, \mR\mV^\text{out}_{i}) \gets \operatorname{GVP-GConv}(\vh_i, \vh_{ij}, \mR\mV_i, \mR\mV_{ij}).
\end{align}

To learn orientations from point clouds, we modify $\operatorname{GVP-GConv}$ according to the characteristics of 3D point clouds.
Specifically, we assign distance to the scalar feature of edges and coordinate difference to the vector feature of edges:
\begin{align}
    \vh_{ij} = [\| \vx_i - \vx_j\|], \ \mV_{ij} = [(\vx_i - \vx_j)].
\end{align}
The message passing formula over the point cloud (denoted as $\text{V-GConv}$) is then formulated as:
\begin{align}
    (\vh_{(i \gets j)}, \mV_{(i \gets j)}) & := \operatorname{GVP}\left( \text{concat}(\vh_i, [\| \vx_i - \vx_j\|]), \text{concat}(\mV_i, [(\vx_i - \vx_j)]) \right), \label{eq:vgconv-1} \\
    (\vh^\prime_{i}, \mV^\prime_{i}) & \gets (\vh_{i}, \mV_{i}) + \frac{1}{|\gN(i)|}\sum_{j \in \gN(i)} (\vh_{(i \gets j)}, \mV_{(i \gets j)}), \label{eq:vgconv-2} \\
    (\vh^\text{out}_{i}, \mV^\text{out}_{i}) & \gets (\vh^\prime_{i}, \mV^\prime_{i}) + \operatorname{GVP}(\vh^\prime_{i}, \mV^\prime_{i}). \label{eq:vgconv-3}
\end{align}
The formula involve point-wise coordinates $\mX$, scalar and vector features $\mH$, $\mV$, so we denote the layer as $(\vh^\text{out}_{i}, \mV^\text{out}_{i}) \gets \operatorname{V-GConv}(\mX, \vh_i, \mV_i)$ for short.
To ensure equivariance, we assign all zero as the initial scalar and vector features of the point cloud, as defined in Eq.\ref{eq:vgconv-layer1}.

\section{Proofs}

In this section, we prove the three propositions in Section~\ref{sec:method}.

\setcounter{propos}{0}
\begin{propos}
The proposed network for learning orientations, denoted as $f_\text{ort}(\vx_1, \ldots, \vx_N) = (\mO_1,\ldots,\mO_N)$, is equivariant to rotation and invariant to translation, satisfying $f_\text{ort}\left( \mR\vx_1 + \vt, \ldots, \mR\vx_N + \vt \right) = \left( \mR\mO_1, \ldots, \mR\mO_N \right)$. 
\end{propos}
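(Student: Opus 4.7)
The plan is to propagate an invariant/equivariant invariant through the entire pipeline via induction on the depth of the V-GConv stack, and then verify that the post-processing (Gram--Schmidt plus cross product) preserves the invariant. Concretely, I will maintain the joint invariant that, after each layer, the scalar features $\mH^{(\ell)}$ of every point are invariant to the action $\vx_i \mapsto \mR\vx_i + \vt$, while the vector features $\mV^{(\ell)}$ transform by $\mR$ on each vector (so they are rotation-equivariant and translation-invariant).

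The base case follows from Eq.\ref{eq:vgconv-layer1}: the initial scalar and vector features are all zero, so they satisfy the invariant vacuously. For the inductive step, consider a V-GConv layer (Eqs.\ref{eq:vgconv-1}--\ref{eq:vgconv-3}). The edge scalars are built from $\|\vx_i - \vx_j\|$, which is invariant under both rotation and translation because translation cancels in the difference and $\|\mR\vw\| = \|\vw\|$. The edge vectors are built from $\vx_i - \vx_j$, for which translation again cancels and rotation acts as $\mR(\vx_i - \vx_j)$, giving exactly the required equivariance. Concatenating these with the inductive $\vh_i$ and $\mV_i$ preserves the invariant, and then by the equivariance of GVP (Eq.\ref{eq:gvp-eq}), the permutation-invariant sum over neighbors, and another GVP, the outputs $\vh_i^{\text{out}}, \mV_i^{\text{out}}$ continue to satisfy the invariant. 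Iterating gives the same property for $\mH^{(L)}$ and $\mV^{(L)}$.

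Next I would show that Eqs.\ref{eq:ortho1}--\ref{eq:ortho3} preserve the equivariance of the two output vectors. The key identities are $\|\mR\vv\| = \|\vv\|$ and $\langle \mR\va, \mR\vb\rangle = \langle\va,\vb\rangle$, which imply that if $\vv_{i1}^{(L)}, \vv_{i2}^{(L)}$ get replaced by $\mR\vv_{i1}^{(L)}, \mR\vv_{i2}^{(L)}$, then every intermediate quantity in the Gram--Schmidt process is simply left-multiplied by $\mR$. Hence $\vu_{i1}, \vu_{i2}$ are rotation-equivariant. For the final orientation matrix (Eq.\ref{eq:ort_matrix}), I use the fact that for any rotation $\mR \in \mathrm{SO}(3)$ we have $(\mR\va)\times(\mR\vb) = \mR(\va\times\vb)$, so the third column transforms the same way; consequently $\mO_i$ becomes $\mR\mO_i$ as claimed.

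The main obstacle is bookkeeping: one must verify at every stage that translation never leaks in (which is handled uniformly by the fact that coordinates only ever appear through the differences $\vx_i - \vx_j$) and that the cross-product identity requires $\det\mR = 1$ (which holds for rigid rotations, as noted in the notation section). Once these two points are made explicit, the inductive argument and the post-processing lemma combine directly to yield $f_\text{ort}(\mR\vx_1 + \vt,\ldots,\mR\vx_N + \vt) = (\mR\mO_1,\ldots,\mR\mO_N)$.
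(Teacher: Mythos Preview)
Your proposal is correct and follows essentially the same route as the paper's own proof: induct through the stacked V-GConv layers using the GVP equivariance property (Eq.~\ref{eq:gvp-eq}) and the translation-cancelling coordinate differences, then push the rotation through the Gram--Schmidt steps via norm and inner-product invariance, and finally through the cross product. If anything, you are slightly more careful than the paper in explicitly flagging the $\det\mR=1$ hypothesis needed for $(\mR\va)\times(\mR\vb)=\mR(\va\times\vb)$.
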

\begin{proof}
First, we show the equivariance of the message passing formula defined by Eq.\ref{eq:vgconv-1}-\ref{eq:vgconv-3}.
Let $\mR \in \text{SO}(3)$ denote an arbitrary proper rotation matrix and $\vt \in \sR^3$ denote an arbitrary vector in the 3D space.

According to the equivariance of GVP (Eq.\ref{eq:gvp-eq}), Eq.\ref{eq:vgconv-1} is equivariant by:
\begin{align*}
    (\vh_{(i \gets j)}, \mR \mV_{(i \gets j)}) & = \operatorname{GVP}\left( \text{concat}(\vh_i, [\| \vx_i - \vx_j \|]), \text{concat}(\mR\mV_i, [ \mR(\vx_i - \vx_j) ]) \right) \\
    & = \operatorname{GVP}\left( \text{concat}(\vh_i, [\| \mR(\vx_i - \vx_j) \|]), \text{concat}(\mR\mV_i, [ \mR(\vx_i - \vx_j) ]) \right) \\
    & = \operatorname{GVP}\left( \text{concat}(\vh_i, [\| \mR\vx_i + \vt - \mR\vx_j - \vt \|]), \text{concat}(\mR\mV_i, [ \mR\vx_i + \vt - \mR\vx_j - \vt ]) \right) \\
    & = \operatorname{GVP}\left( \text{concat}(\vh_i, [\| (\mR\vx_i+\vt) - (\mR\vx_j+\vt)\|]), \text{concat}(\mR\mV_i, [ (\mR\vx_i+\vt) - (\mR\vx_j+\vt) ]) \right)
\end{align*}
Next, it is straightforward to see that Eq.\ref{eq:vgconv-2} is equivariant as it simply sums up vectors, and Eq.\ref{eq:vgconv-3} is equivariant by Eq.\ref{eq:gvp-eq}. By chaining the equivariance of Eq.\ref{eq:vgconv-1}-\ref{eq:vgconv-3}, we can see a $\operatorname{V-GConv}$ layer is equivariant. That is, formally:
\begin{align*}
    (\vh^\text{out}_{i}, \mR\mV^\text{out}_{i}) \gets \operatorname{V-GConv}(\mR\mX+\vt, \vh_i, \mR\mV_i).
\end{align*}
In addition, it is easy to see that stacking multiple $\text{V-GConv}$ layers results to an equivariant function:
\begin{align*}
    (\vh^\ts{L}_{i}, \mR\mV^\ts{L}_{i}) \gets \operatorname{V-GConv}_L(\mR\mX+\vt, \operatorname{V-GConv}_{L-1}(\mR\mX+\vt, ( \ldots \operatorname{V-GConv}_1(\mR\mX+\vt, \vh^\ts{0}_i, \mR\mV^\ts{0}_i) \ldots ))).
\end{align*}
Now, the problem is how to choose initial scalar and vector features $\vh^\ts{0}_i, \mV^\ts{0}_i$ for an input point cloud $\mX$.
From the above equation, we can see that the initial vector features $\mV^\ts{0}$ should be aligned to the rotation $\mR$ in order to preserve equivariance.
However, in most settings, the pose of the point cloud (parameterized by $\mR$ and $\vt$) is unknown.
Therefore, we simply set $\mV^\ts{0} = \vzero$ which is irrelevant to the global pose\footnote{There are some other choices for $\mV^\ts{0}$ such as local principle components of each point. These features align with the global rotation and can be used as initial vector features. In this work, we choose the simplest solution.}.
As for the initial scalar features $\mH^\ts{0} = [\vh^\ts{0}_1, \ldots, \vh^\ts{0}_N]$, they should be invariant to the global pose, so we also simply set them to $\vzero$. 
Setting both initial scalar and vector features to zero leads to the design of the first layer of the orientation learning network (Eq.\ref{eq:vgconv-layer1}).

So far, we have shown that our designed stack of $\text{V-GConv}$ layers is equivariant. 
Note that the final layer of the network outputs two vectors for each point. They are equivariant to rotation and invariant to translation according to the above results.

Next, we consider the equivariance of the Gram-Schmidt process for constructing orientation matrices (Eq.\ref{eq:ortho1}-\ref{eq:ortho3}).
For Eq.\ref{eq:ortho1}, we have:
\begin{align*}
    \widetilde{\vu}_{i1} := \frac{\mR\vv^\ts{L}_{i1}}{\| \mR\vv^\ts{L}_{i1} \|} = \frac{\mR\vv^\ts{L}_{i1}}{\| \vv^\ts{L}_{i1} \|} = \mR \vu_{i1},
\end{align*}
and for Eq.\ref{eq:ortho3}, we have:
\begin{align*}
    \widetilde{\vu}_{i2}  :=
    \frac{\widetilde{\vu}^\prime_{i2}}{\| \widetilde{\vu}^\prime_{i2} \|}
    = \mR\vv^\ts{L}_{i2} - \langle \mR\vv^\ts{L}_{i2}, \mR\vu_{i1}\rangle \mR\vu_{i1} 
    = \mR\left[ \vv^\ts{L}_{i2} - \langle \vv^\ts{L}_{i2}, \vu_{i1}\rangle \vu_{i1} \right]
    = \mR \vu_{i2}.
\end{align*}
These prove the equivariance of Eq.\ref{eq:ortho1}-\ref{eq:ortho3}.
For Eq.\ref{eq:ort_matrix}, we show its equivariance by:
\begin{equation*}
    \widetilde{\mO_i} := \left[ \mR \vu_{i1}, \mR \vu_{i2}, \mR\vu_{i1} \times \mR\vu_{i2} \right] =  \mR\left[ \vu_{i1}, \vu_{i2},\vu_{i1} \times \vu_{i2} \right] =  \mR \mO_i.
\end{equation*}

Finally, as the whole orientation learning network for is a composite of the $\text{V-GConv}$ network and the Gram-Schmidt-based matrix construction process.
Therefore, the network, denoted as $f_\text{ort}$, is equivariant to the rotation and invariant to translation, satisfying $f_\text{ort}\left( \mR\vx_1 + \vt, \ldots, \mR\vx_N + \vt \right) = \left( \mR\mO_1, \ldots, \mR\mO_N \right)$.
\end{proof}

\begin{propos}
Under the assumption that $(\mO_1, \ldots, \mO_N)$ comes from an equivariant function $g$ satisfying $g(\mR\mX + \vt) = (\mR\mO_1, \ldots, \mR\mO_N)$, and that $\mH$ comes from an invariant function $h$ satisfying $h(\mR\mX + \vt) = \mH$,
the message passing formula in Eq.\ref{eq:oriented_general}, denoted as $f^\ts{\ell}_\text{mp}$, is invariant to rotation and translation and satisfies $f^\ts{\ell}_\text{mp}\left(\mR\mX+\vt \right) = f^\ts{\ell}_\text{mp}\left(\mX \right)$.
\end{propos}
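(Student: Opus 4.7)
The plan is to show that the three arguments passed to the message function $H$ in Eq.\ref{eq:oriented_general} are each individually unchanged under the action $\mX \mapsto \mR\mX + \vt$, which trivially makes the aggregated output invariant regardless of the specific choice of $H$ or $\operatorname{Agg}$.

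First, I would dispatch the scalar feature arguments $\vh^\ts{\ell}_i$ and $\vh^\ts{\ell}_j$. By the assumption on $h$, the entire feature matrix $\mH$ is invariant under the transformation, so each individual $\vh^\ts{\ell}_i$ is invariant. This step is immediate and needs essentially no calculation.

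The substantive step is the third argument $\mO_i^\intercal(\vx_j - \vx_i)$. Here I would first use the assumption on $g$ to replace $\mO_i$ by $\mR\mO_i$, and separately observe that the coordinate difference transforms as $(\mR\vx_j + \vt) - (\mR\vx_i + \vt) = \mR(\vx_j - \vx_i)$, the translation cancelling out. Then the projected vector becomes
\begin{equation*}
(\mR\mO_i)^\intercal \mR(\vx_j - \vx_i) = \mO_i^\intercal \mR^\intercal \mR (\vx_j - \vx_i) = \mO_i^\intercal (\vx_j - \vx_i),
\end{equation*}
using the orthogonality $\mR^\intercal \mR = \mI$ that holds for any proper rotation. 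This cancellation is the geometric heart of the result and matches the decoupling intuition illustrated in Fig.\ref{fig:invariant-mp}.

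Having shown that all three inputs to $H$ agree before and after the transformation, for every $j \in \gN(i)$, the value of $H(\cdot,\cdot,\cdot)$ is preserved term-by-term, and hence the aggregation $\operatorname{Agg}_{j \in \gN(i)}$ returns the same vector. This gives $f^\ts{\ell}_\text{mp}(\mR\mX+\vt) = f^\ts{\ell}_\text{mp}(\mX)$. There is no real obstacle; the only subtlety is being explicit that the neighbor set $\gN(i)$ must itself be defined in a rotation/translation-invariant way (e.g.\ $k$-nearest neighbors by Euclidean distance), which I would note briefly since otherwise the aggregation indices could drift under the transformation.
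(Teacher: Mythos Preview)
Your proposal is correct and follows essentially the same approach as the paper: both reduce the claim to the cancellation $(\mR\mO_i)^\intercal \mR(\vx_j - \vx_i) = \mO_i^\intercal \mR^\intercal \mR (\vx_j - \vx_i) = \mO_i^\intercal (\vx_j - \vx_i)$ via orthogonality of $\mR$, together with the assumed invariance of the scalar features. Your extra remark about $\gN(i)$ needing to be defined in a rotation/translation-invariant way is a valid subtlety the paper does not spell out.
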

\begin{proof}
\begin{align*}
    \widetilde{\vh}_i^\ts{\ell+1} := f^\ts{\ell+1}_\text{mp}\left(\mR\mX+\vt \right) & = \underset{j \in \gN(i)}{\operatorname{Agg}} H\left( \vh^\ts{\ell}_i, \vh^\ts{\ell}_j, (\mR\mO)^\intercal_i\left((\mR\vx_j+\vt) - (\mR\vx_i+\vt)\right) \right) \\
    & = \underset{j \in \gN(i)}{\operatorname{Agg}} H\left( \vh^\ts{\ell}_i, \vh^\ts{\ell}_j, \mO^\intercal_i \cancel{\mR^\intercal \mR} (\vx_j - \vx_i) \right) \\
    & = \underset{j \in \gN(i)}{\operatorname{Agg}} H\left( \vh^\ts{\ell}_i, \vh^\ts{\ell}_j, \mO^\intercal_i (\vx_j - \vx_i) \right) \\
    & = f^\ts{\ell+1}_\text{mp}\left(\mX \right) = \vh_i^\ts{\ell+1}.
\end{align*}
Note that initial point-wise features $\vh_i^\ts{0}(i=1,\ldots,N)$ are all set to zero, which are invariant to $\mX$. Hence the above equation hold.
When $\ell > 0$, by induction, $\vh_i^\ts{\ell-1}(i=1,\ldots,N)$ are invariant to $\mX$, so the equations also hold.

\end{proof}

\begin{propos}
Under the assumption that $(\mO_1, \ldots, \mO_N)$ comes from an equivariant function $g$ satisfying $g(\mR\mX + \vt) = (\mR\mO_1, \ldots, \mR\mO_N)$, and that $\mH^\ts{L}$ comes from an invariant function $h$ satisfying $h(\mR\mX + \vt) = \mH^\ts{L}$,
The property estimator $f_\text{prop}$ defined by Eq.\ref{eq:equiv-property-1} and Eq.\ref{eq:equiv-property-2} is equivariant to rotation and invariant to translation, 
satisfying $f_\text{prop}\left(\mR\mX+\vt \right) = \mR f_\text{prop}\left(\mX \right)$.
\end{propos}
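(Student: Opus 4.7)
The plan is to verify the equivariance claim by applying the transformation $(\mX \mapsto \mR\mX + \vt)$ to the two-line definition of $f_\text{prop}$ and tracking how each intermediate quantity transforms, using only the hypotheses on $g$ and $h$ together with the linearity of the action of $\mR$.

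First I would evaluate the MLP step on the transformed input. By the invariance hypothesis on $h$, the point-wise features after the transformation are identical to those before: $h(\mR\mX + \vt) = \mH^{\ts{L}}$, so in particular each $\vh^\ts{L}_i$ is unchanged. Because $\operatorname{MLP}(\cdot)$ is a deterministic function of its argument, Eq.\ref{eq:equiv-property-1} immediately yields the same vector $\vp_i$ regardless of whether the input point cloud is $\mX$ or $\mR\mX + \vt$. No subtlety enters here; the only thing being used is that invariance of the input to a pure function implies invariance of the output.

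Next I would plug this into Eq.\ref{eq:equiv-property-2} together with the equivariance hypothesis on $g$, which asserts that under the transformation the $i$-th orientation matrix becomes $\mR\mO_i$. Combining the two facts gives $\widetilde{\ve}_i = (\mR\mO_i)\vp_i = \mR(\mO_i \vp_i) = \mR\ve_i$, where the middle equality is just associativity of matrix multiplication. Collecting the components yields $f_\text{prop}(\mR\mX + \vt) = \mR f_\text{prop}(\mX)$, establishing both rotational equivariance and translational invariance (the translation $\vt$ has already disappeared in both intermediate quantities).

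There is no genuine obstacle in this proof; the only thing to be careful about is stating exactly where each hypothesis is used and not accidentally appealing to Proposition~\ref{prop:orientation} or Proposition~\ref{prop:invariant-feature} directly (the statement is deliberately phrased in terms of abstract $g$ and $h$ so that the result composes cleanly with those two propositions). Once the proof is written, I would close with a one-sentence remark that chaining Propositions~\ref{prop:orientation}, \ref{prop:invariant-feature}, and~\ref{prop:equiv-property} gives the end-to-end equivariance of the full pipeline, which is exactly the observation already made by the authors just before stating this proposition.
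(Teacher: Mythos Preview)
Your proposal is correct and follows essentially the same approach as the paper's proof, which is a one-line computation $\widetilde{\ve}_i := f_\text{prop}(\mR\mX+\vt) = \mR\mO_i\operatorname{MLP}(\vh_i^\ts{L}) = \mR f_\text{prop}(\mX) = \mR\ve_i$. You have simply unpacked the two steps (invariance of $\vp_i$ via $h$, then equivariance of $\mO_i$ via $g$) more explicitly, which is fine.
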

\begin{proof}
\begin{align*}
    \widetilde{\ve}_i := f_\text{prop}\left(\mR\mX+\vt \right) = \mR\mO_i\operatorname{MLP}(\vh_i^\ts{L}) = \mR f_\text{prop}\left(\mX \right) = \mR \ve_i.
\end{align*}
\end{proof}
\section{Additional Results}

\paragraph{Point Cloud Part Segmentation: SO(3)/SO(3)}
The following table is supplementary to Table \ref{tab:partseg} in Section \ref{sec:expr:pointcloud:partseg}, showing the part segmentation performance of our model and baselines in SO(3)-train/SO(3)-test setting. 
\vspace{-0.8em}
\begin{table*}[h]
    \centering
    \caption{Point cloud segmentation results in the IoU (\textit{Inter-over-Union}). In SO(3)/SO(3) setting, our model outperforms other baselines on 10 out of 16 shapes.} 
    \vspace{-0.8em}
    \label{tab:partseg_so3so3}
    \begin{adjustbox}{width=1.0\textwidth}
    \begin{tabular}{l|cccc cccc cccc cccc c}
\toprule

SO(3)/SO(3) & plane & bag & cap & car & chair & earph. & guitar & knife & lamp & laptop & motor & mug & pistol & rocket & skate & table  \\ 
\hline

PointNet~\cite{qi2017pointnet} & {81.6} & 68.7 &  74.0 &  70.3 &  87.6 &  68.5 &  88.9 &  80.0 &  74.9 &  83.6 &  56.5 &  77.6 &  75.2 &  53.9 &  69.4 &  79.9 \\

PointNet++ (MSG)~\cite{qi2017pointnet++}& 79.5 & 71.6 &  {\bf 87.7} &  70.7 &  {88.8} &  64.9 &  88.8 &  78.1 &  79.2 &  94.9 &  54.3 &  92.0 &  76.4 &  50.3 &  68.4 &  81.0 \\

PointCNN~\cite{li2018pointcnn} & 78.0 & 80.1 &  78.2 &  68.2 &  81.2 &  70.2 &  82.0 &  70.6 &  68.9 &  80.8 &  48.6 &  77.3 &  63.2 &  50.6 &  63.2 &  {\bf 82.0} \\

DGCNN~\cite{wang2019dynamic} & 77.7 & 71.8 &  77.7 &  55.2 &  87.3 &  68.7 &  88.7 &  {85.5} &  {81.8} &  81.3 &  36.2 &  86.0 &  {77.3} &  51.6 &  65.3 &  80.2 \\

ShellNet~\cite{zhang2019shellnet} & 79.0 & 79.6 &  80.2 &  64.1 &  87.4 &  71.3 &  88.8 &  81.9 &  79.1 &  {95.1} &  57.2 &  91.2 &  69.8 &  {55.8} &  73.0 &  79.3 \\ 
\hline

RI-Conv \cite{zhang2019rotation} & 80.6 & 80.2 &  70.7 &  68.8 &  86.8 &  70.4 &  87.2 &  84.3 &  78.0 &  80.1 &  57.3 &  91.2 &  71.3 &  52.1 &  66.6 &  78.5 \\

GC-Conv \cite{zhang2020global} & 
    81.2 & 82.6 & 81.6 & 70.2 & 88.6 & 70.6 & 86.2 & \bf 86.6 &
    81.6 & 79.6 & 58.9 & 90.8 & 76.8 & 53.2 & 67.2 & 81.6 \\

RI-Fwk. \cite{li2021rotation} & 81.4 & \bf 84.5 &  85.1 &  {75.0} &  88.2 &  {72.4} &  {90.7} &  84.4 &  80.3 &  \bf 84.0 &  \bf 68.8 &  {92.6} &  76.1 &  52.1 &  {74.1} &  80.0 \\

\hline

Ours & 
    \bf 81.8 & 78.8 & 85.4 & \bf 78.0 & \bf 89.6 & \bf 76.7 & \bf 91.6 & 85.7 &
    \bf 81.7 & 82.1 & 67.6 & \bf 95.0 & \bf 79.1 & \bf 63.5 & \bf 76.5 & 81.0 \\
\bottomrule

\end{tabular}
    \end{adjustbox}
    \vspace{-1.0em}
\end{table*}

\paragraph{Additional Visualization of Learned Orientations}
Figure \ref{fig:suppl-ort1} and \ref{fig:suppl-ort2} present more examples of learned orientations from the point cloud classification model. 
We observe that though not explicitly supervised, the learned orientations are correlated with the structure of shapes.
For example, from the first two rows of Figure \ref{fig:suppl-ort1}, we can see that red arrows on the airplane's wings and tails head to the direction to which the wings and tails stretch.
We also find that the orientations are aware of structural similarity --- the red arrows on chair legs consistently point to the front of the chair.
We also note that the learned orientations are not trivial. An example is the second shape on the bottom row of Figure \ref{fig:suppl-ort2}. The red arrows are obviously correlated to the vase's surface normals, rather than simply diverge from the center of the vase.

\begin{figure*}[h]
    \centering
    \includegraphics[width=0.8\textwidth]{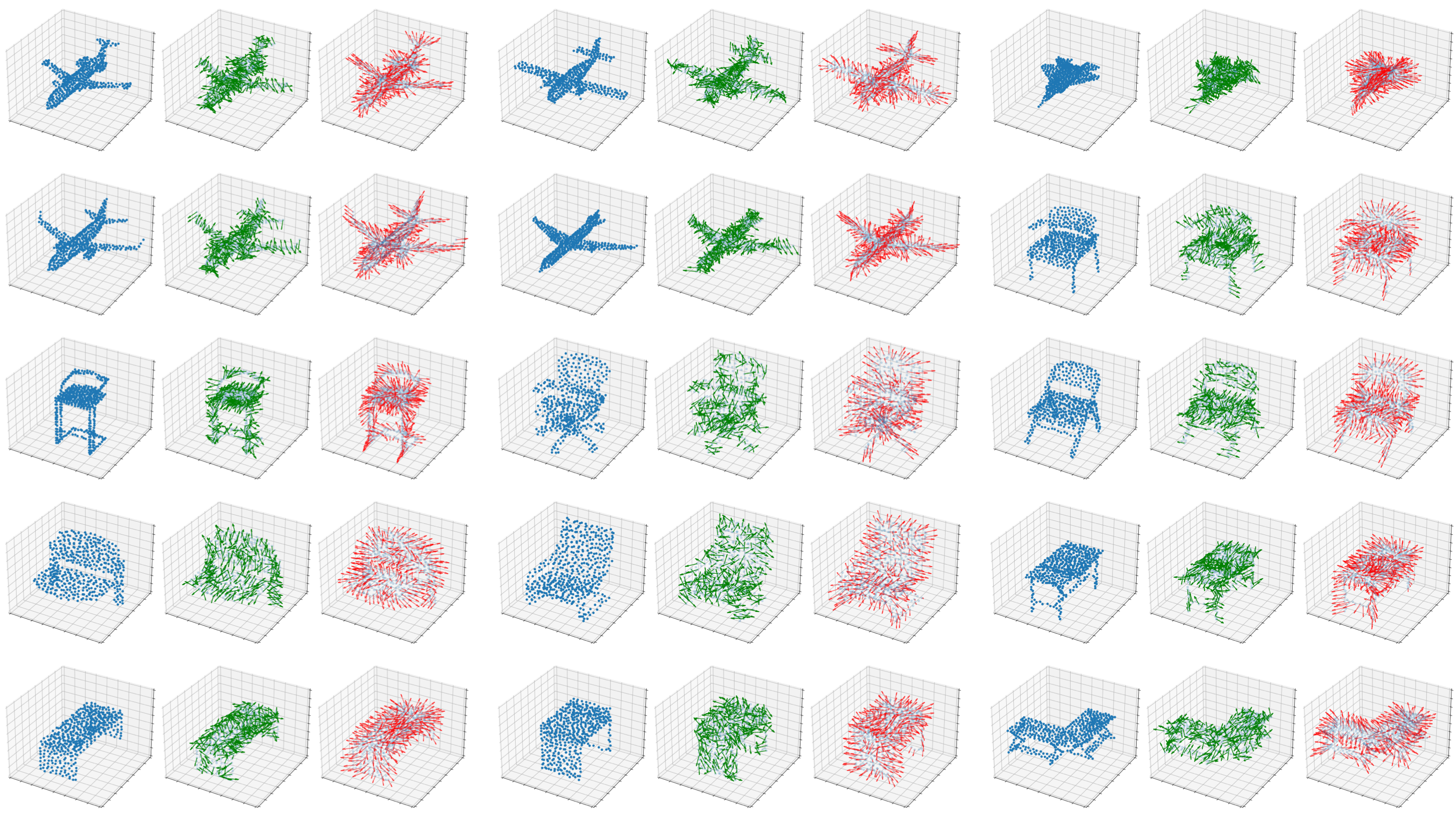}
    \caption{Green and red arrows indicate the first two column vectors of learned orientation matrices.}
    \label{fig:suppl-ort1}
    \vspace{-1.2em}
\end{figure*}

\begin{figure*}[h]
    \centering
    \includegraphics[width=0.8\textwidth]{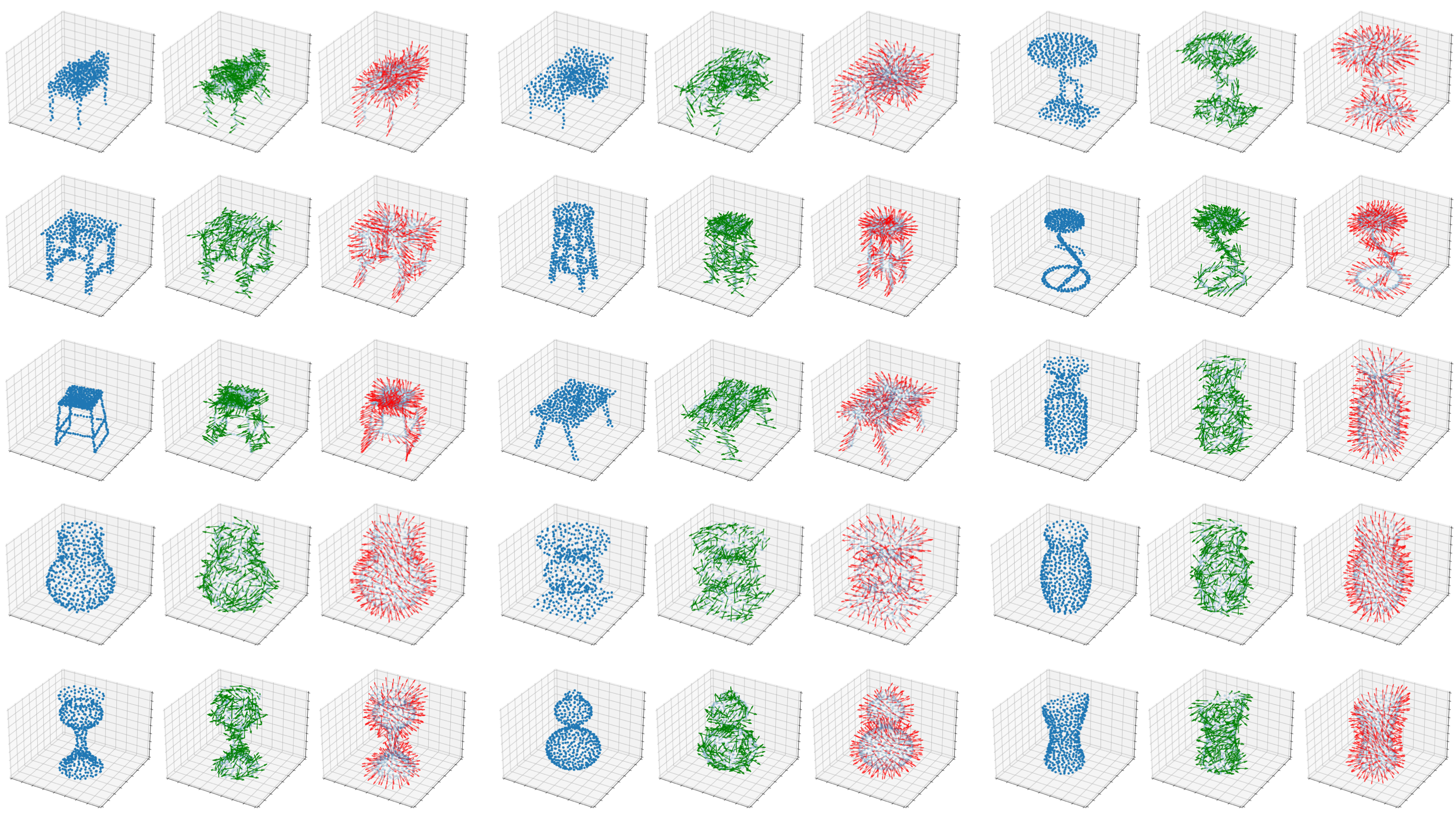}
    \caption{Green and red arrows indicate the first two column vectors of learned orientation matrices.}
    \label{fig:suppl-ort2}
    \vspace{-1.2em}
\end{figure*}


\end{document}